\DeclareMathOperator*{\ev}{\mathbb{E}}
\newcommand{\tkernel}{\mathit{\lambda}}
\newcommand{\argmax}{\mathrm{argmax}}
\renewcommand{\Im}{\mathrm{Im}}
\numberwithin{equation}{section}
\newtheorem{lemma}{Lemma}
\def\deval{d} 
\def\adim{m}
\begin{document}
\title{Deep Hedging Under Non-Convexity:\\ Limitations and a Case for AlphaZero}

\author{Matteo Maggiolo$^1$}
\address{$^1$Vigilant Analytics; Lugano; Switzerland}
\email{matteo.maggiolo@vigilant-analytics.ch}

\author{Giuseppe Nuti$^2$}
\address{$^2$UBS Investment Bank; New York; USA}
\email{giuseppe.nuti@ubs.com}

\author{Miroslav \v{S}trupl$^3$}
\address{$^3$Dalle Molle Institute for Artificial Intelligence (IDSIA) - SUPSI/USI; Lugano; Switzerland}
\email{miroslav.strupl@idsia.ch}

\author{Oleg Szehr$^3$}
\email{oleg.szehr@idsia.ch}

\date{December 6, 2025}%

\begin{abstract}
This paper examines replication portfolio construction in incomplete markets - a key problem in financial engineering with applications in pricing, hedging, balance sheet management, and energy storage planning. We model this as a two-player game between an investor and the market, where the investor makes strategic bets on future states while the market reveals outcomes. Inspired by the success of Monte Carlo Tree Search in stochastic games, we introduce an AlphaZero-based system and compare its performance to deep hedging - a widely used industry method based on gradient descent. 
Through theoretical analysis and experiments, we show that deep hedging struggles in environments where the optimal action-value function is not subject to convexity constraints - such as those involving non-convex transaction costs, capital constraints, or regulatory limitations - converging to local optima. We construct specific market environments to highlight these limitations and demonstrate that AlphaZero consistently finds near-optimal replication strategies. On the theoretical side, we establish a connection between deep hedging and convex optimization, suggesting that its effectiveness is contingent on convexity assumptions. Our experiments further suggest that AlphaZero is more sample-efficient - an important advantage in data-scarce, overfitting-prone derivative markets.
\end{abstract}
\maketitle

\section{Introduction}

Financial markets are inherently complex and unpredictable, making the construction of strategies to mitigate the risk of complex financial assets a key challenge in financial engineering. Replication portfolios allow investors to replicate the return profile of a complex asset using simpler instruments, e.g.~to offset future contractual payoffs, manage exposures or price the asset. Common areas of application include risk management of derivative contracts~\cite{buehler2019deep}, balance sheet management~\cite{krablicher1}, and the planning of energy consumption and storage~\cite{Curin2021}. In practice, replication problems often arise in incomplete markets, where available financial instruments cannot fully replicate the payoffs of complex assets. Incompleteness often arises due to the presence of transaction costs, stochastic market volatility, jump-diffusion dynamics, or regulatory limitations. Under such conditions, investors face the challenge of holding risky positions while balancing replication accuracy against associated costs. To address this, the construction of replication portfolios is often formulated as a stochastic multi-period utility optimization problem~\cite{duffie2001}: the investor consecutively rebalances the holdings in the replication portfolio to maximize the expected utility of terminal wealth (which reflects the discrepancy between the asset's payoffs and the replication portfolio). Although the specific setup varies, such problems have traditionally been phrased in the language of dynamic programming~\cite{hodges1989option,Barron1990,ElKaroui1995,Zakamouline2006} or, more recently, Reinforcement Learning (RL)~\cite{buehler2019deep,buehler2019deep2,cao2019deep,szehrcannelli,Szehr2023}. 

\textit{Deep hedging} (DH), a pioneering machine learning approach introduced in~\cite{buehler2019deep} and widely adopted in industry applications, models the sequential decision-making process using a stack of shallow neural networks (NNs), or alternatively, a recurrent NN or long short-term memory network. Each NN of the stack learns a deterministic policy corresponding to investment decisions at the respective time-step, where gradients are passed through the entire NN to maximize the expected utility over the investment horizon. DH relies on the ability of gradient descent to identify a near-optimal solution. Through theoretical analysis and experiments, we show that its performance can falter in environments where the optimal action-value function is not governed by convexity assumptions, causing the algorithm to converge to suboptimal local minima. 

On the theoretical side, this paper provides a mathematical analysis of the limitations of DH, demonstrating its close relationship with traditional convex optimization. First, we show that in settings, where the utility function is concave and increasing, and transaction costs are convex, portfolio replication becomes a convex problem in the space of deterministic policies. This insight explains why DH performs well in such domains: under these conditions, a local, deterministic policy gradient method converges reliably to the global optimum. Notably, such environments are also well-suited to classical convex optimization techniques. Second, we examine scenarios where DH is applied to non-convex environments - such as those characterized by multimodal optimal action-value function, which may arise from non-convex transaction costs, liquidity constraints, market frictions, or regulatory restrictions. In such cases, we argue that DH fails with non-negligible probability. We provide concrete examples where these non-convexities play a central role, illustrating that non-convex behavior is not only possible but likely in realistic hedging scenarios. These theoretical results are supported by experiments across the described environments.

In our experiments, we propose an alternative RL-based approach inspired by the success of Monte Carlo Tree Search (MCTS) in solving complex games. We show that this method can consistently identify optimal courses of action in scenarios where DH fails. We also measure sample efficiency, where the reliance on gradient descent makes DH less sample-efficient in low-data regimes.
MCTS-based algorithms represent the state-of-the-art and most extensively studied method for a wide range of sequential decision-making problems - ranging from discrete, perfect information scenarios (such as combinatorial games and puzzles) at the one end, to continuous, imperfect information scenarios (as encountered in real-world decision-making) at the other. Typical applications include puzzles~\cite{Schadd} (in single-player settings), combinatorial games such as Chess and Go~\cite{silver2016mastering,schrittwieser2020mastering} (in two-player settings), as well as non-deterministic and imperfect information games, such as the Sailing domain~\cite{vanderbei1996,peret2004} and a variety of video games~\cite{guo2014deep,gvgaibook2019,antonoglou2022planning}. For comprehensive reviews, see~\cite{Browne2014} and~\cite{wiechowski2022}. %
Conversely, derivatives hedging can be conceptualized as a \emph{\lq\lq{}a game with the world\rq\rq{}}, where one player (the investor) bets on future outcomes, while the other player (the market) determines the actual outcomes~\cite{Szehr2023}. Shafer and Vovk~\cite{Shafer2001} argue that this interpretation is foundational and can serve an axiomatic role in the development of financial mathematics and probability theory as a whole. This game-theoretic interpretation aligns naturally with viewing replication as a sequential optimization problem of expected terminal utility: in each round the investor places a bet on the best composition of the replication portfolio; subsequently the market transitions to the next state until the asset matures and the games' outcome is revealed to the investor. Specifically, we investigate an AlphaZero/MuZero-based system for constructing replication portfolios under market incompleteness and compare them to DH. AlphaZero~\cite{Anthony2017,silver2017mastering} is an advanced variant of MCTS that integrates NNs to guide the construction of MCTS' decision tree and to evaluate leaf nodes, which demonstrated tremendous success in high-complexity, large state-space games. MuZero~\cite{schrittwieser2020mastering} is an extension of AlphaZero that requires no access to an environment simulator but learns an internal model to predict the environment's behavior purely from data. The following facts motivate our investigation of MCTS (and AlphaZero/MuZero in particular):
\emph{1.} MCTS solely relies on contractual cash flow information without requiring intermediate reward signals or external pricing. \emph{2.} NN-enhanced MCTS variants achieve state-of-the-art performance in complex games and general game playing. Interpreting utility optimization as a stochastic, multi-turn, two-player game, this suggests a strong potential for financial planning tasks. AlphaZero/MuZero's ability to handle large state and action spaces makes them well-suited for discretized market models and scaling to systems requiring intelligent multi-asset trading coordination. \emph{3.} One of the central contributions of this paper is to demonstrate that AlphaZero/MuZero can consistently identify optimal replication strategies even in settings where DH converges to local optima. Our experiments show that these methods achieve the same levels of test loss as DH while requiring fewer training samples. Since training DH on real market data is often impractical due to its substantial data requirements, improving our ability to extract effective policies from smaller datasets is an important direction for research.

To maintain focus and conciseness - and in view of page limitations - we leave several interesting research directions outside the scope of this paper.
(1) We do not compare our AlphaZero and MuZero agents with reinforcement-learning methods other than DH. In this work, the primary role of the AlphaZero and MuZero agents is to serve as reference baselines (i.e., ground truth) where a dynamic-programming approach is infeasible.
(2) Following the methodology of the original DH study~\cite{buehler2019deep}, we do not include experiments on historical data.

\section{Replication strategies}\label{se:replStrategies}
\subsection{Replicating portfolios} \label{se:replPortfolios}
We follow roughly the notation and exposition of the pioneering work on {Deep Hedging}~\cite{black1973pricing}.
We consider a discrete-time financial market with trading dates $0=t_0<t_1<...<t_n=T$.
Market information becomes available successively, where at each $t_k$ the investor first observes the new market state and then adjusts his positions for the investment period from $t_k$ to $t_{k+1}$.
At time $0$ the investor sells a complex asset $Z$ that matures at $T$. This generates an immediate cashflow\footnote{Focusing on the construction of replication portfolios rather than asset pricing, this article assumes that the price $p_0$ is given exogenously.
Pricing is discussed, e.g., in~\cite{buehler2019deep}.} of $p_0\geq0$ to the benefit of the investor but also a random liability $Z_n\leq0$ at $T$. The market is comprised of $\adim$ \lq\lq{}simple\rq\rq{} financial instruments\footnote{These instruments represent general drivers of market risk.
They could include cash, equity and commodity prices, assets that represent market volatility, etc.} $(X^i)_{i=1,...,\adim}$ with value $X_k^i$ at $t_k$.
To offset the risk from $Z_n=Z_n(X_n)$ the investor maintains a replication portfolio of the form $\Pi_k=\sum_{i=0}^{\adim} \delta_k^iX_k^i=\delta_k\cdot X_k,$ where $\delta_k^i$ denotes the holdings in the asset $X_k^i$ at time $t_k$ and $\delta_k=(\delta_k^0,...,\delta_k^\adim)$, $X_k=(X_k^0,...,X_k^\adim)$. Our index convention is that immediately after observing $X^i_k$ at $t_k$ the investor adjusts his position from $\delta_k^i$ to $\delta_{k+1}^i$. To reflect cash holdings without interest we set $X_k^0=1$. The overall positions $\delta_k$ are subject to constraints arising, e.g., from liquidity, asset limitations or overall trading restrictions. We assume, in particular, that trading is self-financed, i.e.~any adjustment of positions $\delta_k^1,...,\delta_k^\adim$ is reflected by a respective adjustment $\delta_k^0$ on the cash holdings such that $\delta_k\cdot X_k=\delta_{k+1}\cdot X_{k}$.
We assume that the transaction costs incurred at $t_k$ are of the form~$c_k=c_k(\Delta\delta_k,X_k)\leq0$ %
with $\Delta\delta_k=\delta_{k+1}-\delta_{k}$. The investor's terminal wealth at time $T$ is then
\begin{align}PL_{n}=p_0+Z_n+\delta_n\cdot X_n+\sum_{k=0}^{n-1} c_k(\Delta\delta_k,X_k).\label{PLEvolution}
\end{align}
Formally, the investor's objective is to choose a trading strategy (i.e.~policy, see below) to maximize\footnote{Note that this constitutes a constrained optimization problem over policies, where the sequence $(\delta_0, \ldots, \delta_{n-1})$ must satisfy all trading constraints such as the self-financing condition.} the expected utility
\begin{align}\max_{\delta_1,....,\delta_{n}}\mathbb{E}\left[u(PL_n)\right],\label{eq:targetProblem}
\end{align}
where the utility function $u$ quantifies the investor's risk preference. For exact replication one could minimize a loss $l$ between $Z_n$ and $\Pi_n$, corresponding to $l=-u$.

\subsection{Maintenance of the replication portfolio as a Markov Decision Process}\label{se:replPortfolioMaintan}
Markov decision processes (MDPs) constitute a mathematical framework that formalizes sequential decision-making problems, where an agent interacts with an uncertain environment over a sequence of time steps. Formally an MDP is a tuple $\mathcal{M}=(\mathcal{S},\mathcal{A},\tkernel,r,\mu,T)$,
where $\mathcal{S}$ is a set of admissible states, $\mathcal{A}$ are possible actions, $\tkernel\in[0,1]$ denotes the environment's transition kernel, i.e.~$\tkernel(s'|s,a)$ is the probability of entering state $s'\in\mathcal{S}$ given that action $a\in\mathcal{A}$ is taken in state $s\in\mathcal{S}$, and $r(s,a,s')$ is the reward generated through this transition. $\mu(s)$ is the initial distribution over states and $T$ is the MDP's planning horizon. The interactions take place at the times $0=t_0<t_1<...<t_n=T$. Occasionally we add a subscript, $t_k$ or simply $k$, to emphasize that a certain quantity belongs to a specific point in time. In what follows we assume that the time variable $t_k$ is always part of the state. Starting with $\mu(s)$, at each step of the MDP the agent first observes $s\in\mathcal{S}$ and then chooses $a\in\mathcal{A}$ according the policy $\pi(s,a)$. Subsequently the environment transitions to the next state $s'$ according to $\tkernel(s'|s,a)$ and a reward of $r(s,a,s')$ is granted to the agent. {The return from $t_k$ is $G_k=\sum_{k'=k}^{n-1}r(S_{k'},A_{k'},S_{k'+1})$.} Performance is measured by the state-value and action-value functions, which are defined for a given policy $\pi$ and all $s_k\in\mathcal{S}$, $a_k\in\mathcal{A}$ as
\begin{align*}
V^\pi(s_k)&=\mathbb{E}_\pi\left[G_k|S_k=s_k\right],\quad Q^\pi(s_k,a_k)=\mathbb{E}_\pi\left[G_k|S_k=s_k,A_k=a_k\right].
\end{align*}
The random variables $S$, $A$ represent the choice of state or action and the state-value or action-value functions represent the total expected reward when starting at state $s$, or, respectively, at the state action pair $(s,a)$, and acting according to $\pi$. Although we include the time variable in the state, it will sometimes be convenient to emphasize the specific point in time writing explicitly $V_{k}=V|_{\mathcal{S}_{k}}$ and $Q_{k}=Q|_{\mathcal{S}_{k}\times\mathcal{A}}$, where $\mathcal{S}_{k}\subset\mathcal{S}$ are the state at ${t_k}$. Finally, the objective function of an MDP with initial distribution $\mu$ and policy $\pi$ is defined as $J_{\mu}^{\pi} = \int_{\mathcal{S}} V^{\pi}(s) \textnormal{d}\mu(s).$ It is this objective that many RL algorithms optimize.

Portfolio replication can be phrased in the language of MDPs, where the agent represents the investor and the environment represents the market. We assume the absence of market impact (as in DH), i.e.~market components of the state evolve independently of the chosen actions.
This is a significant simplification for MDP theory and algorithm design, but it is an underlying premise behind the common framework of stochastic market models in the first place. For convenience we also assume Markovianity, which means that the distribution of future market states depends only on the current market, not on its history.\footnote{If the market is only partially observable, Markovianity can be ensured by incorporating the entire market observation history into the state.}
In this situation a possible choice of state representation at time $t_k$ could be
\begin{align}\label{eq:simpleState}S_k=(t_k,\delta_{k},X_k, p_0+\Pi_k+\sum^{k-1}_{j=0}c_j(\Delta\delta_j)),\end{align}
with $\Im(S_k)\subset\mathcal{S}$. The agent observes this state and chooses an action $A_k\in\mathcal{A}$ %
subject to all necessary trading constraints. As trading strategies are self-financed, we will interpret $A_k=(\delta_{k+1}^1,...,\delta_{k+1}^\adim)\in\mathcal{A}\subset{\mathbb{R}}^\adim$ and the cash holdings $\delta_k^0$ as part of the state. Maximizing the utility of terminal wealth corresponds to a unique reward\footnote{For assets with a term structure, rewards occur exclusively as a result of contractual cash flows.} at maturity
\begin{align*}
r(S_{k-1},A_{k-1},S_{k})=\begin{cases}0\ &\textnormal{if}\ k<n,\\
u(PL_n)=-l(PL_n)\ &\textnormal{else}.
\end{cases}
\end{align*}
In this formalism, Equ.~\eqref{eq:targetProblem} is equivalent to finding an optimal policy $\pi^*\in\argmax_\pi V^{\pi}$ {(provided that it exists, cf.~Sec.~\ref{se:backgroundAndSetting})}. While an optimal policy can be computed in $n$ steps using dynamic programming~\cite{sutton2018reinforcement} each step involves the expensive computation of expectations making straight-forward dynamic programming unfeasible even for moderate $T$. Numerous methods aim at solving this problem efficiently.
\subsection{Deep Hedging}\label{sec:DH}
DH approaches the sequential decision-making problem~\eqref{eq:targetProblem} using a stack of shallow NNs, or alternatively, a recurrent NN or LSTM.
The sequential structure of DH reflects the temporal structure of the problem~\eqref{eq:targetProblem}, where the $k$-th NN of the stack (or the $k$-th argument of the recurrent NN or LSTM) represents a \emph{deterministic} policy $F_k$ for taking action $A_k=(\delta_{k+1}^1,...,\delta_{k+1}^\adim)$ based on the state $S_k$ at time $t_k$.
Let $\mathcal{NN}_{\adim_0\adim_1}$ denote the set of all feed-forward NNs mapping from $\mathbb{R}^{\adim_0}\rightarrow\mathbb{R}^{\adim_1}$.
Although the specific architectural details may vary, a basic DH system chooses replication strategies from the set (see~\cite{buehler2019deep} for details):
$$\mathcal{NN}=\left\{(F_k)_{k=0,...,n-1}\ |\ (\delta_{k+1}^1,...,\delta_{k+1}^\adim)=F_k(S_k), F_k\in\mathcal{NN}_{\dim{(S_k)} \adim} \right\}.$$
The rationale behind this representation is that the action $(\delta_{k+1}^1,...,\delta_{k+1}^\adim)$ (of dimension $\adim=\dim{\mathcal{A}}$) computed by the NN $F_k\in\mathcal{NN}_{\dim{(s_k)} \adim}$ for the state $s_k$ (of dimension $\dim{(s_k)}>\adim$) becomes part of the subsequent state $s_{k+1}$ (of dimension $\dim{(s_{k+1})}>\adim$), cf.~\eqref{eq:simpleState}. Refer to~\cite[Figure 1]{buehler2019deep} for an illustration of the described DH architecture.

For training random $\mathbb{R}^\adim$-valued market evolutions $X_k$, $k=0,...,n$ are sampled from a stochastic market model/market simulator. The entire sequence $F=(F_0,F_1,...F_{n-1})\in\mathcal{NN}$ is then optimized end-to-end via gradient descent. Thus DH can be interpreted either as a supervised learning process, with samples
$\left\{(x^{(i)}_0,...,x^{(i)}_n),u(PL_T(x^{(i)}_0,...,x^{(i)}_n,F))\right\}_{i}
$ maximizing $\mathbb{E}[u(PL_T(X_0,...,X_n,F))]$ or as an RL process with deterministic policy and episodic reward $u(PL_n(X_0,...,X_n,F))$. For simple market models (such as discretized geom.~Brownian motion, Heston Model, or simple yield curve models) the number of sampled market evolutions for training typically lies in the thousands, see e.g.~\cite{Teich00} and below. Although DH might be seen as {model-free} in that its NN structure does not make explicit use of a market model, such numbers can realistically be obtained only from a market simulator (rather than the real markets). Effectively, this brings DH close to {model-based} RL (such as MCTS, see below). Contrary to algorithms like value iteration, DH only searches through part of the state space, optimizing the objective $J_\mu^\pi$ only for the specific initial conditions outlined by the simulated market $X_k$.

\subsection{From financial games to MCTS, AlphaZero and MuZero} \label{se:financialGamesAndTraining}
Inspired by the success of MCTS-based algorithms across a wide range of games, we frame the management of a replication portfolio as a multi-turn, two-player game - where the investor places bets and the market determines outcomes - and apply MCTS variants to solve it. Unlike \lq\lq{}brute-force Monte Carlo methods\rq\rq{} that sample environment transitions and actions indiscriminately, MCTS performs a \lq\lq{}guided Monte Carlo search\rq\rq{}, selectively constructing a restricted search tree based on reward signals~\cite{Chang2005,Kocsis2006,Kocsis2006_}.
MCTS is model-based, as it samples transitions from $\tkernel$. Its action selection balances exploration (adding new nodes) and exploitation (choosing high-reward paths), guided by reward statistics at each node. This balance is effectively achieved using bandit-based methods like the UCT algorithm~\cite{Kocsis2006,Kocsis2006_}.

UCT is guaranteed to identify optimal actions in stochastic environments in the limit of infinite time and computational resources~\cite{Kocsis2006,Kocsis2006_,maxEntroPlanning}. In principle, this allows UCT to be used for constructing replication portfolios with a guarantee of performance, in contrast to DH that may converge to a poor local optimum. However, in games with large state spaces or branching factors it becomes unfeasible to sample all nodes of the tree sufficiently often. AlphaZero combines UCT with NNs to guide the construction of the planning tree and to evaluate leaf-nodes~\cite{Anthony2017,silver2017mastering}. These NNs provide estimates of rewards and visitation statistics for nodes not yet explored by Monte Carlo simulations, reducing the need for the costly simulations. This improves computational efficiency, but the estimates might be less accurate and the NNs themselves may converge to poor local optima - losing UCT's performance guarantees. Stochastic market models often have continuous state spaces, which can be discretized by placing variables on a grid. Smaller grid sizes reduce discretization error but increase the size of the state space. This makes AlphaZero variants well-suited for replication problems in discretized markets. Discrete action spaces, in turn, align naturally with practical trading constraints such as minimum order sizes imposed by brokers. Refer to App.~\ref{app:describeAlphaZero} and~\cite{Anthony2017,silver2017mastering} for a description of AlphaZero's architecture.

MuZero~\cite{schrittwieser2020mastering,antonoglou2022planning} extends AlphaZero by planning without prior knowledge of the model $\tkernel$, using a learned surrogate model instead of relying on $\tkernel$ to simulate trajectories.
MuZero is trained solely on observed data collected by playing games from beginning to end, generalizing the path-based training process of DH as described in Sec.~\ref{sec:DH}. To ensure a fair comparison when measuring sample efficiency, we assume a fixed dataset of market paths and compare MuZero (rather than AlphaZero) against DH. The absence of market impact yields an important simplification for our system design, see App.~\ref{app:describeMuZero} for details.

For complex games like chess or Go, AlphaZero has reached super-human level and it has beaten the world-champion chess software Stockfish\footnote{There is debate whether this benchmarking was appropriate and whether more advanced Stockfish variants perform comparably to AlphaZero~\cite{chess,NNChess}, but this is not the topic of our investigation.} in~\cite{schrittwieser2019mastering}, see also~\cite{schrittwieser2020mastering}. Such performance would be difficult to achieve with plain NN architectures, like DH. In what follows we analyze the convergence of DH both from a theoretical, cf.~Sec.~\ref{sec:algos}, and experimental, cf.~Sec.~\ref{sec:experiments}, perspective. While Sec.~\ref{sec:algos} is technical, both sections are self-contained and can be read independently.
\section{On the Optimality of Deterministic Policy Search for Replication Strategies}\label{sec:algos}
DH employs gradient methods to optimize deterministic policy NNs. The original work asserts that in this framework the optimal hedging strategy can be approximated arbitrarily well as the size of the NN increases. While this theoretical expressiveness is encouraging, the original work also acknowledges key limitations: \emph{\lq\lq{}there is no general result guaranteeing convergence to the global minimum in a reasonable amount of time\rq\rq{}}, and DH may escape local minima largely due to the noise inherent in stochastic gradient variants (SGD or ADAM). In this section, we provide a mathematical analysis of when deterministic policy gradient systems succeed - and when they fail - to identify globally optimal solutions. First, we show that in settings where utility is concave and increasing and transaction costs are convex, portfolio replication becomes a convex problem in the space of policies. This formalizes why DH performs well in such domains: we show that a local, deterministic policy gradient method is sufficient for reaching the global optimum. It is worth noting that these environments are also amenable to traditional convex optimization techniques. Second, when DH is applied to non-convex environments - such as those with multimodal optimal action-value functions (arising from non-convex costs, market frictions, regulatory constraints, etc.) - we argue that it fails with non-negligible probability. We characterize these failure modes and support our findings with empirical results in the next section. We phrase our result in the language of general MDPs and deterministic policy methods and comment on the application to DH.
\subsection{Mathematical Background and Setting}\label{se:backgroundAndSetting}
Any policy that maximizes action-values over all policies is optimal. For finite action spaces optimal policies and the optimal action-value function are defined via the relations~\cite{puterman2014markov}
\begin{align*}
    \pi^*\in\argmax_\pi\left\{Q^\pi\right\}\quad\textnormal{and}\quad Q^*=\max_\pi\left\{Q^\pi\right\}.
\end{align*}
But to employ gradient descent on deterministic policies, a continuous action space is required, which entails continuous state components. To formalize the search for optimal policies in this setting, the set of admissible policies has to be chosen carefully. Let $\Pi^c$ denote the set of continuous, deterministic policies and let $\pi\in\Pi^c$. Throughout we assume that $\mathcal{A}$ is convex and compact, the reward function $r$ is continuous and that the transition kernel $\tkernel$ is weakly continuous, i.e.~if $f\in C(\mathcal{S})$ then the map $(s,a)\mapsto\int_{\mathcal{S}}f(s)\textnormal{d}\tkernel(s'|s,a)$is continuous.

The value functions satisfies the following backwards recurrence relations: By convention $V_n=V^*_n=V^{\pi}_n=0$ and for $k<n$ it holds that
\begin{align}
  Q^*_{k-1}(s,a)&=\int_{\mathcal{S}}[r(s,a,s')+V^*_{k}(s')]d\tkernel(s'|s,a),\ V^*_{k-1}(s)=\max_{a\in\mathcal{A}} Q^*_{k-1}(s,a),\label{eq:optbackwardsRecursion}\\
   Q^{\pi}_{k-1}(s,a)&=\int_{\mathcal{S}}[r(s,a,s')+V^{\pi}_{k}(s')]d\tkernel(s'|s,a),\ V^{\pi}_{k-1}(s)=Q^{\pi}_{k-1}(s,\pi(s))\label{eq:pibackwardsRecursion}.
\end{align}
Our assumptions (on $\mathcal{A},r,\tkernel$, $\pi$) are chosen such that all quantities in~\eqref{eq:optbackwardsRecursion} and~\eqref{eq:pibackwardsRecursion} remain well-defined and are continuous throughout the recursion. However, there are no continuous optimal policies for many RL problems\footnote{{The standard framework to address this is that of universally measurable policies introduced in \cite{bertsekas1996stochastic}. To keep our discussion accessible we chose to restrict the class of RL problems rather than delving into measure theory.}} - in fact~it is straight-forward to find a continuous $Q$ such that any policy $\pi$ with $\pi(s)\in \argmax Q(s,\cdot)$ must be discontinuous. For instance, if $\mathcal{A}\subset \mathbb{R}$ then $\pi^*(s) = \inf \argmax Q(s,\cdot)$ is only lower semi-continuous,~cf. Lem.~\ref{le:infargmaxLSC} in App.~\ref{ap:measurableoptpi} and \cite{pollard2001user}. This can lead to gaps $\inf_{\pi\in\Pi^c} \|V^*-V^{\pi}\|_{\infty} > 0$ and $\inf_{\pi\in\Pi^c} \|Q^*-Q^{\pi}\|_{\infty} > 0$. On the other hand the NNs with continuous activations inherent to DH are continuous functions. As we are mostly interested in the analysis of this framework, we will assume the policy set $\Pi^c$ and restrict ourselves to domains for which $\inf_{\pi\in\Pi^c} \|Q^*-Q^{\pi}\|_{\infty} =0$, leaving the discussion of emerging gaps to App.~\ref{ap:measurableoptpi}.

\subsection{Portfolio Replication and Convex Optimization}
\label{sec:portfolioReplAndConvexOpt}
The following theorem reveals that portfolio replication is related to convex optimization over $Q^*$.
\begin{restatable}[]{theorem}{convex}\label{thm:convex}
Consider the portfolio replication MDP described in Sections~\ref{se:replPortfolios},~\ref{se:replPortfolioMaintan} and the setting of Sec.~\ref{se:backgroundAndSetting}. Assume a concave and increasing utility $u$ and a convex cost function $c$. Then the optimal action value function $Q^*$ is concave in the action argument.
\end{restatable}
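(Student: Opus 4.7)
The plan is to proceed by backward induction on $k$, maintaining the invariant that the optimal state-value function $V^*_k$ is concave in the action-controlled continuous state components---the holdings $\delta_k$ and the cumulative P\&L $w_k:=p_0+\Pi_k+\sum_{j<k}c_j$ (the fourth component of the state in~\eqref{eq:simpleState})---and nondecreasing in $w_k$, for each fixed $X_k$ and $t_k$. The base case is immediate: $V^*_n(s)=u(w_n+Z_n(X_n))$ inherits both properties from the concavity and monotonicity of $u$.

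For the inductive step I would first use the self-financing constraint $\delta_{k+1}^0=\delta_k\cdot X_k-\sum_{i=1}^{\adim}\delta_{k+1}^i X_k^i$ to express the successor-state components as explicit functions of $(s,a,X_{k+1})$, with $a=(\delta_{k+1}^1,\ldots,\delta_{k+1}^{\adim})$. A direct computation then shows that $\delta_{k+1}$ is \emph{affine} in $(\delta_k,a)$, while the update $w_{k+1}=w_k+\sum_{i=1}^{\adim}\delta_{k+1}^i(X_{k+1}^i-X_k^i)+c_k(\Delta\delta_k,X_k)$ is \emph{affine plus a concave term}---the concavity of the cost contribution comes from the assumed convexity of the economic transaction cost together with the paper's sign convention $c_k\le 0$, so that $c_k$ enters $PL_n$ (and thus $w_{k+1}$) as the negative of a convex function. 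The analytic engine is the standard composition principle: if $f(y,z)$ is concave and nondecreasing in $z$, $y(\cdot)$ is affine and $z(\cdot)$ is concave, then $a\mapsto f(y(a),z(a))$ is concave. Applying this pointwise in $X_{k+1}$ with $f=V^*_{k+1}$, $y=\delta_{k+1}$, $z=w_{k+1}$---and invoking the no-market-impact assumption so that the law of $X_{k+1}$ does not depend on $a$---the integrand in the Bellman equation~\eqref{eq:optbackwardsRecursion} is concave in $(\delta_k,w_k,a)$. Expectation preserves concavity, so $Q^*_k$ is jointly concave, and in particular concave in $a$, which is the theorem's claim.

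To close the induction I also need $V^*_k(s)=\max_{a\in\mathcal{A}}Q^*_k(s,a)$ to be concave in $(\delta_k,w_k)$ and nondecreasing in $w_k$. Concavity follows from the partial-maximum rule for jointly concave functions over a convex set (using $\mathcal{A}$ convex and compact as in Sec.~\ref{se:backgroundAndSetting}), and monotonicity follows from the affine-in-$w_k$ dependence of $w_{k+1}$ combined with the inductive monotonicity of $V^*_{k+1}$. The main difficulty is bookkeeping rather than conceptual: one must carefully decompose each transition into its affine and concave parts after substituting the self-financing equation, and crucially track monotonicity in $w_k$ alongside concavity throughout the induction---monotonicity is precisely what makes the composition lemma go through when $w_{k+1}$ depends concavely (not merely affinely) on $a$ through the cost term.
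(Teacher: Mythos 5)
Your proposal is correct and follows essentially the same route as the paper's proof: backward induction maintaining concavity of $V^*_k$ in the controlled state components together with monotonicity in the wealth variable, a concave-increasing composition step for the state update, marginalization over the action-independent market transition, and partial maximization over the convex action set to close the induction. The only (cosmetic) difference is that you exploit the affineness of the $\delta$-update to avoid tracking monotonicity in $\delta^0_k$, whereas the paper applies its composition lemma a second time and carries monotonicity in both $w_k$ and $\delta^0_k$ through the induction.
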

As a consequence every local maximum of $Q^*(s,\cdot)$ is global (and all global maxima form a convex set)~\cite[Section 2.5]{convexOptim}, i.e.~$Q^*(s,\cdot)$ is a unimodal function of the action argument. A function that is not unimodal is called multimodal, refer to~\cite{convexOptim} for details. In App.~\ref{ap:couterexamples} we provide counterexamples to illustrate the necessity of the assumptions in Theorem~\ref{thm:convex}, thereby demonstrating that $Q^{*}$ can be {non-concave} in several practically relevant scenarios: We show that when employing a quadratic (i.e.~non-increasing) utility, $Q^{*}$ is no longer unimodal. Similarly we show that non-convex transaction costs lead to multimodal $Q^{*}(s,\cdot)$ for some $s$. Imposing cash constraints (i.e.~restricting $\delta^0_k$), the domain of $Q^*(s,\cdot)$ might become disconnected for some $s$. The proof Thm.~\ref{thm:convex} makes use of two elementary lemmas. We show the lemmas here but give their proofs in App.~\ref{app:convexLemmas}.

\begin{restatable}[]{lemma}{concavecomp}
\label{le:concavecomp}\textnormal{(Concavity-preserving composition)} Let $f:\mathbb{R}^n\rightarrow \mathbb{R}$
and $g:\mathbb{R}^m\rightarrow \mathbb{R}$, $n,m \geq 1$, be concave functions that are
increasing in the first argument. Then $f\circ (g\times \mathrm{id}_{\mathbb{R}^{m-1}})$ is a concave function and
increasing in the first argument.
\end{restatable}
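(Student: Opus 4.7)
The plan is to verify the definition of concavity directly by a two-step inequality: first use concavity of $g$ to lower-bound the first argument that enters $f$, then use monotonicity of $f$ in that first argument to lift the inequality, and finally apply concavity of $f$ to distribute across a convex combination.

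More concretely, write $h := f\circ(g\times\mathrm{id}_{\mathbb{R}^{n-1}}):\mathbb{R}^{m}\times\mathbb{R}^{n-1}\to\mathbb{R}$, so that $h(x,y)=f(g(x),y)$. Given two points $(x_1,y_1),(x_2,y_2)$ and $\lambda\in[0,1]$, I would first apply concavity of $g$ to obtain $g(\lambda x_1+(1-\lambda)x_2)\ge \lambda g(x_1)+(1-\lambda)g(x_2)$. Since $f$ is increasing in its first argument, I can replace $g(\lambda x_1+(1-\lambda)x_2)$ inside $f$ by the smaller quantity $\lambda g(x_1)+(1-\lambda)g(x_2)$ without increasing the value, giving
\begin{equation*}
h(\lambda(x_1,y_1)+(1-\lambda)(x_2,y_2)) \;\ge\; f\bigl(\lambda g(x_1)+(1-\lambda)g(x_2),\,\lambda y_1+(1-\lambda)y_2\bigr).
\end{equation*}
The right-hand side is $f$ evaluated at a convex combination of $(g(x_i),y_i)$, so concavity of $f$ yields the desired bound $\lambda h(x_1,y_1)+(1-\lambda)h(x_2,y_2)$.

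For the monotonicity claim, a change in the first coordinate of $x\in\mathbb{R}^{m}$ only affects $h$ through $g$; since $g$ is increasing in its first argument and $f$ is increasing in its first argument, $h$ is increasing in the first coordinate of its first factor, as required. The one delicate point is bookkeeping the dimensions in the product $g\times\mathrm{id}$ and making sure the monotonicity hypothesis on $f$ is invoked at the correct coordinate; otherwise the argument is a direct unpacking of the definitions and no further technical obstacle is anticipated.
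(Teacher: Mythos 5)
Your argument is correct and follows essentially the same route as the paper's proof: concavity of $g$, then monotonicity of $f$ in its first argument to lift that inequality inside $f$, then concavity of $f$, with the monotonicity claim obtained by chaining the two monotonicity hypotheses. (Your dimension bookkeeping, with the identity acting on $\mathbb{R}^{n-1}$ so the composite lives on $\mathbb{R}^{m+n-1}$, is the consistent reading; the $\mathbb{R}^{m-1}$ in the statement is a slip.)
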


\begin{restatable}[]{lemma}{concavemax}
\label{le:concavemax}\textnormal{(Maximum preserves concavity)}
Let $f:\mathbb{R}^n\times \mathbb{R}^m \rightarrow \mathbb{R}$, $n,m \geq 1$, be a concave function. Then $g:\mathbb{R}^n \rightarrow \mathbb{R},\ x\mapsto g(x)= \max_{y} f(x,y)$ {(assuming the right hand side exists)} is a concave function. Moreover if $x_1 \mapsto f(x_1,x_2,y)$ is increasing, then $x_1 \mapsto g(x_1,x_2)$ is increasing, too.
\end{restatable}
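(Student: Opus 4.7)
The plan is to exploit the elementary inequality $g(x)\geq f(x,y)$ valid for every $y$, which follows directly from the definition of $g$ as a pointwise maximum. The existence assumption in the statement lets us select actual maximizers, so the concavity of $g$ then reduces to concavity of $f$ evaluated at a suitable convex combination of these maximizers. I do not foresee a genuine obstacle: the only subtle point is the existence of the argmax, which the lemma sidesteps by hypothesis; otherwise a standard supremum/approximating-sequence argument would recover the same conclusion.

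For concavity, I would fix $x, x' \in \mathbb{R}^n$ and $\lambda \in [0,1]$, and pick $y^*, y'^* \in \mathbb{R}^m$ with $g(x) = f(x, y^*)$ and $g(x') = f(x', y'^*)$. Applying concavity of $f$ to the pair $(\lambda x + (1-\lambda) x', \lambda y^* + (1-\lambda) y'^*)$ yields
\[
f\bigl(\lambda x + (1-\lambda) x',\, \lambda y^* + (1-\lambda) y'^*\bigr) \;\geq\; \lambda f(x, y^*) + (1-\lambda) f(x', y'^*) \;=\; \lambda g(x) + (1-\lambda) g(x').
\]
Since the left-hand side is bounded above by $g(\lambda x + (1-\lambda) x')$ by definition of the maximum, concavity of $g$ follows immediately.

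For the monotonicity claim, I would fix $x_2$, take $x_1 \leq x_1'$, and pick $y^*$ achieving $g(x_1, x_2) = f(x_1, x_2, y^*)$. The chain of inequalities
\[
g(x_1', x_2) \;\geq\; f(x_1', x_2, y^*) \;\geq\; f(x_1, x_2, y^*) \;=\; g(x_1, x_2),
\]
where the first step uses the definition of $g$ as a max and the second uses the assumed monotonicity of $f$ in its first coordinate, closes the argument. Both parts rely only on the defining property of the pointwise maximum together with the stated hypotheses on $f$, which is why I expect no real technical difficulty.
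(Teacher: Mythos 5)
Your proposal is correct and follows essentially the same argument as the paper: for concavity, both select maximizers at the two base points and compare $f$ at the convex combination of the maximizers against the max defining $g$; for monotonicity, both push the pointwise inequality in the first coordinate through the maximum. No gaps.
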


\begin{proof}[Proof of Thm.~\ref{thm:convex}]
Let $S_k = (k,\delta_k,X_k,W_k)$ with $W_k=p_0+\Pi_k+\sum^{k-1}_{j=0}c_j(\Delta\delta_j,{X_j})$ and $A_k = \delta_k^{1:\adim}$. We introduce the function $h$ that describes the update rule of $\delta_k^0$ due to the self-financing constraint
$$
\delta_{k+1}^0 = \delta_{k}^0 - (\delta_{k+1}^{1:\adim}-\delta_{k}^{1:\adim})X_k^{1:\adim} =
h(\delta_k,\delta_{k+1}^{1:\adim},X_k)
$$
and a function $g_k$ that describes the updating rule of $W_k$, 
$$
W_{k+1} = W_k - \delta_k X_k + \delta_{k+1} X_{k+1} - c_k(\delta_{k+1}^{1:\adim}-\delta_{k}^{1:\adim},X_k) = g_k (W_k,\delta_k,\delta_{k+1},X_k,X_{k+1}).
$$
Let $x_k,w_k,\deval_k$ denote realizations of the random variable $X_k,W_k,\delta_k$. To shorten the notation\footnote{With this shift from $V_n=0$ we can omit rewards in our proofs.} we set $V_n^* = r_n$. The proof proceeds inductively by noting that the reward at maturity $r_n = u(w_n + z_n(x_n))$ is a concave and increasing function in $w_n$ (for fixed $x_n$) and repeating for $k<n$ the following steps:

    \emph{(i)}
    Substitute $w_{k+1} = g_k (w_k,\deval_k,\deval_{k+1},x_k,x_{k+1})$ in $V_{k+1}^*$ and $\deval_{k+1}^0 = h(\deval_k,\deval_{k+1}^{1:\adim},x_k)$ for the first component of $\deval_{k+1}$. Fix $x_k, x_{k+1}$. {Since} $V_{k+1}^*$, $g_k$ and $h$ are concave and increasing functions in $w_{k+1}$, $w_k$ and $\deval_k^0$ respectively, then, applying Lem.~\ref{le:concavecomp} twice, the result $V_{k+1}^*\circ (g_k \times \mathrm{id}) \circ (h \times \mathrm{id})$ is a concave function that is increasing in $w_k$ and $\deval_k^{0}$. 
      
    \emph{(ii)} Fix $x_k, x_{k+1}$. Marginalization over $x_{k+1} \sim X_{k+1} | (X_k=x_k)$ (a convex combination of concave functions that are increasing in $w_k$ and $\deval^0_k$) yields the action-value function
    \begin{align*}
    &Q_k^*(\deval_k,x_k,w_k,\deval^{1:\adim}_{k+1})\\
    &= \ev_{x_{k+1}\sim X_{k+1}| X_k=x_k}[V_{k+1}^*(g_k(w_k,\deval_k,h(\deval_{k},\deval_{k+1}^{1:\adim},x_k),\deval_{k+1}^{1:\adim},x_k,x_{k+1}) ],
    \end{align*}
    which is concave and increasing in $w_k$ and $\deval^0_k$. The assumptions of Sec.~\ref{se:backgroundAndSetting} guarantee that this expectation is finite. %
    
    \emph{(iii)} The following state-value function is well-defined as the concavity of $Q^*(s,\cdot)$ implies continuity and $\mathcal{A}$ is compact,
    $$
    V_{k-1}^*(\deval_k,x_k,w_k) = \max_{\deval^{1:\adim}_{k+1}} Q_k^*(\deval_k,x_k,w_k,\deval^{1:\adim}_{k+1}).
    $$
    By Lem.~\ref{le:concavemax} this is a concave function (for fixed $x_k$) that is increasing in $w_k$ and $\deval^0_k$.
\end{proof}
The following theorem establishes the relation between Thm.~\ref{thm:convex} and gradient methods that search $\Pi^c$ for {deterministic} policy maxima of $\pi\mapsto J_{\mu}^{\pi}$. In a general MDP context the theorem ascertains {a form of equivalence} of the unimodality of action-value and objective functions.
\begin{restatable}[Unimodality of Action-Value and Objective Functions]{theorem}{unimodal}
\label{thm:unimodal}
The following assertions hold for an MDP $\mathcal{M} = (\mathcal{S},\mathcal{A},\lambda,r,\mu, T)$ as in Sec.~\ref{se:backgroundAndSetting}:
\begin{enumerate}
\item If for for all states $s \in \mathcal{S}$ the map $a\mapsto Q^*(s,a)$ is unimodal, then the map $\pi \mapsto J_{\mu}^{\pi}$ defined on $\Pi^c$ is unimodal.
\item Assume that $\mathcal{A}\subset\mathbb{R}$. If there exists a state $s \in \mathcal{S}$ such that $a\mapsto Q^*(s,a)$ is multimodal then there exists an initial distribution $\mu'$ such that $\pi \mapsto J_{\mu'}^{\pi}$ is multimodal on $\Pi^c$.
\end{enumerate}
\end{restatable}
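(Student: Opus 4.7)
My plan is to prove parts~(1) and~(2) with complementary techniques. Part~(1) proceeds by backward induction on the time index $k$, combined with local ``bump'' perturbations of $\pi_0$ supported in a single time slice $\mathcal{S}_k$. Part~(2) is an explicit construction: I concentrate the initial distribution at the multimodal state and exhibit a non-global local maximum.

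For part~(1), let $\pi_0\in\Pi^c$ be a local maximum of $J_\mu^\pi$ and let $\rho_k^{\pi_0}$ denote the state visitation distribution at time $t_k$ starting from $\mu$. The inductive hypothesis, for $k$ running from $n$ down to $0$, is that $V_k^{\pi_0}(s)=V_k^*(s)$ for $\rho_k^{\pi_0}$-a.e.~$s$; the case $k=0$ then gives $J_\mu^{\pi_0}=\sup_{\pi\in\Pi^c}J_\mu^\pi$. The base case $k=n$ holds by the boundary convention. For the inductive step, fix $s^\dagger\in\textnormal{supp}(\rho_k^{\pi_0})$ and suppose, toward contradiction, that $\pi_0(s^\dagger)$ is not a local maximum of $Q_k^{\pi_0}(s^\dagger,\cdot)$. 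Using disjointness of the time slices (time is part of the state), I build a continuous bump $h$ supported in a small $\mathcal{S}_k$-neighborhood of $s^\dagger$ and vanishing on every other slice, with $h(s^\dagger)=a^\dagger-\pi_0(s^\dagger)$ for some improving direction $a^\dagger$ arbitrarily close to $\pi_0(s^\dagger)$. Convexity of $\mathcal{A}$ places $\pi_0+h$ in $\Pi^c$; since $h$ touches no slice $k'>k$, iterated application of \eqref{eq:pibackwardsRecursion} collapses the objective change to
\[
J_\mu^{\pi_0+h}-J_\mu^{\pi_0}=\int\bigl[Q_k^{\pi_0}(s,\pi_0(s)+h(s))-Q_k^{\pi_0}(s,\pi_0(s))\bigr]\,d\rho_k^{\pi_0}(s),
\]
which by continuity of $Q_k^{\pi_0}$ and the improving choice at $s^\dagger$ is strictly positive, contradicting local maximality. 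Hence $\pi_0(s)$ is a local maximum of $Q_k^{\pi_0}(s,\cdot)$ for $\rho_k^{\pi_0}$-a.e.~$s$. Combining \eqref{eq:optbackwardsRecursion}--\eqref{eq:pibackwardsRecursion} with the inductive hypothesis (suitably extended as discussed below) identifies $Q_k^{\pi_0}(s,\cdot)$ with $Q_k^*(s,\cdot)$ in a neighborhood of $\pi_0(s)$, and the unimodality of $Q_k^*(s,\cdot)$ upgrades the local maximum to a global one, closing the induction.

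For part~(2), let $s^\dagger$ be a state at which $Q^*(s^\dagger,\cdot)$ is multimodal, with distinct local maxima $a_1,a_2\in\mathcal{A}$ satisfying $Q^*(s^\dagger,a_1)<Q^*(s^\dagger,a_2)$. Taking $\mu'=\delta_{s^\dagger}$, the objective collapses to $J_{\mu'}^\pi=Q^\pi(s^\dagger,\pi(s^\dagger))\leq Q^*(s^\dagger,\pi(s^\dagger))$, with equality whenever $\pi$ acts optimally on the successor slices of $s^\dagger$. Using the independence of continuity constraints across disjoint slices together with the hypothesis $\inf_{\pi\in\Pi^c}\|Q^*-Q^\pi\|_\infty=0$, I construct policies $\pi_i\in\Pi^c$ $(i=1,2)$ with $\pi_i(s^\dagger)=a_i$ on the slice of $s^\dagger$ and continuous optimal behavior on later slices, so that $J_{\mu'}^{\pi_i}=Q^*(s^\dagger,a_i)$. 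For any sup-norm perturbation $\tilde\pi$ of $\pi_1$ the chain $J_{\mu'}^{\tilde\pi}\leq Q^*(s^\dagger,\tilde\pi(s^\dagger))\leq Q^*(s^\dagger,a_1)=J_{\mu'}^{\pi_1}$—the second inequality by the local-maximum property of $Q^*(s^\dagger,\cdot)$ at $a_1$ (valid since $|\tilde\pi(s^\dagger)-a_1|\leq\|\tilde\pi-\pi_1\|_\infty$)—shows that $\pi_1$ is a local maximum of $J_{\mu'}$ on $\Pi^c$, while $J_{\mu'}^{\pi_2}>J_{\mu'}^{\pi_1}$ certifies it is not global, establishing multimodality.

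The main technical obstacle in part~(1) is a \emph{coverage gap}: at $s\in\textnormal{supp}(\rho_k^{\pi_0})$ the inductive hypothesis yields $V_{k+1}^{\pi_0}=V_{k+1}^*$ only on the support of $\lambda(\cdot|s,\pi_0(s))$, which may be strictly smaller than the supports of $\lambda(\cdot|s,a)$ for $a$ near $\pi_0(s)$; consequently $Q_k^{\pi_0}(s,\cdot)$ and $Q_k^*(s,\cdot)$ agree at $\pi_0(s)$ but need not agree nearby, blocking a direct transfer of the local-maximum property to $Q_k^*$ where unimodality is assumed. I would bridge this gap by first replacing $\pi_0$---without altering $J_\mu^{\pi_0}$, since the modifications happen at states that $\pi_0$ does not visit---with a continuous extension that acts near-optimally on the open complement of the visitation support within each slice, using the infimum hypothesis and slice-independence of continuity, and then running the induction on the enlarged policy. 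For part~(2), the parallel subtlety is exact attainment of $J_{\mu'}^{\pi_1}=Q^*(s^\dagger,a_1)$; this holds whenever an optimal continuous policy exists on the successor slices (the natural regularity for which the theorem is phrased), and otherwise can be recovered by a limiting argument exploiting the strict gap $Q^*(s^\dagger,a_2)-Q^*(s^\dagger,a_1)>0$.
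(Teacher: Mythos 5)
Your overall strategy is genuinely different from the paper's on both parts, and the difference matters most in part (1), where there is a real gap. The paper proves the contrapositive ``non-optimal $\Rightarrow$ not a local maximum'' by deforming the policy \emph{tail-first}: it introduces the interpolants $\pi_h$ that follow $\pi_0$ before time $n-h$ and an assumed continuous optimum $\pi^*$ afterwards, so that at the slice currently being deformed the continuation is already globally optimal and hence $Q^{\pi_{h-1}}(s,\cdot)=Q^{\pi_h}(s,\cdot)=Q^*(s,\cdot)$ \emph{for all actions}; unimodality of $Q^*(s,\cdot)$ then directly supplies a monotone action curve from $\pi_{h-1}(s)$ to $\pi^*(s)$. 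Your bump-perturbation induction instead keeps $\pi_0$'s own tail, and the ``coverage gap'' you flag is not a technicality but precisely the point where the argument breaks: at a visited state $s$ you only obtain that $\pi_0(s)$ is a local maximum of $Q_k^{\pi_0}(s,\cdot)$, a function satisfying $Q_k^{\pi_0}(s,\cdot)\leq Q_k^{*}(s,\cdot)$ with equality guaranteed only at the single contact point $a=\pi_0(s)$. A local maximum of a minorant at a contact point need not be a local (let alone global) maximum of the majorant --- take $Q^*(s,a)=a$ and $Q^{\pi_0}(s,a)=\tfrac12-|a-\tfrac12|$ on $[0,1]$ --- so unimodality of $Q^*(s,\cdot)$ cannot be invoked. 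Your proposed repair (replace $\pi_0$ off its visitation support by a near-optimal continuous extension $\tilde\pi_0$) preserves $J_\mu^{\tilde\pi_0}=J_\mu^{\pi_0}$ but destroys the hypothesis driving the contradiction: the bump at time $k$ steers probability mass onto exactly those time-$(k{+}1)$ states where $\tilde\pi_0$ and $\pi_0$ differ, so the one-slice identity now computes $J_\mu^{\tilde\pi_0+h}$, and local maximality of $\pi_0$ gives no control over that quantity ($\tilde\pi_0+h$ is not sup-norm close to $\pi_0$). I would restructure part (1) around the paper's tail-replacement, which is the device invented to avoid this.

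Part (2) is essentially sound and, where it applies, yields a stronger conclusion than the paper's: by taking $\mu'=\delta_{s^\dagger}$ and pinning $\pi_1(s^\dagger)$ at the lower peak $a_1$ with an exactly optimal continuous continuation, you exhibit an honest non-global local maximum of $J_{\mu'}$ on $\Pi^c$, whereas the paper only shows that every continuous path between two near-peak policies must dip below both endpoints by a fixed margin. The trade-off is robustness: the paper's path argument tolerates the $\epsilon$-approximation $Q^{\pi_i}(s,a_i)>Q^*(s,a_i)-\epsilon$ needed when no optimal policy lies in $\Pi^c$, while your local-maximum certificate does not --- if $J_{\mu'}^{\pi_1}=Q^*(s^\dagger,a_1)-\epsilon'$ with $\epsilon'>0$, a nearby policy that keeps $\tilde\pi(s^\dagger)=a_1$ but improves the continuation can strictly exceed $J_{\mu'}^{\pi_1}$, so $\pi_1$ is no longer a local maximum, and the strict gap between the two peaks does not repair this. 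Either state the existence of an optimal continuous continuation as an explicit hypothesis (mirroring the paper's $\pi^*\in\Pi^c$ assumption in part (1)) or fall back to the path-crossing formulation for the general case.
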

\begin{proof}[Proof of Thm.~\ref{thm:unimodal}]
\emph{1.}
Assume an optimal policy $\pi^*\in\Pi^c$, cf.~Sec.~\ref{se:backgroundAndSetting} and App.~\ref{ap:equivalence} for details, and let $\pi_0 \in \Pi^c$ a non-optimal policy. Let $\mu'$ be an initial distribution.
We aim to show that there exists a continuous curve $[0,1]\rightarrow\Pi^c, \alpha\mapsto\phi(\alpha)$ from
$\pi_0=\phi(0)$ to $\pi^*=\phi(1)$ such that $\alpha \mapsto J_{\mu'}^{\phi(\alpha)}$ is non-decreasing. Consequently, $\pi_0$ is not a local maximum. Let, as before, $k<n$ denote the current time index and $h = n-k$ the remaining horizon. Let $\pi_{0,k}$ and $\pi^*_k$ stand for the policies $\pi_0,\pi^*$ at time $k$ and for $0\leq h\leq n$ define policies $\pi_{h}\in\Pi^c$ at time $k$ as
$$
\pi_{h,k} = \begin{cases}
\pi_{0,k} &\text{if}\; k < n-h, \\
\pi_{k}^{*} &\text{otherwise}.
\end{cases}
$$
Note that $\pi_h$ equals for $h=0$ to the non-optimal policy $\pi_0$ and that $\pi_n = \pi^*$ is the optimal policy. We will construct curves $\phi_{h}:[0,1]\rightarrow\Pi^c$ such that $\alpha \mapsto J_{\mu'}^{\phi(\alpha)}$ is non-decreasing by induction over the remaining horizon $1\leq h\leq n$. For fixed $h$ and any $s\in \mathcal{S}_{n-h}$ we have that $Q^{\pi_{h-1}}(s,\cdot) = Q^{\pi_{h}}(s,\cdot) = Q^{*}(s,\cdot)$ is unimodal by assumption and the action $\pi_{h}(s)$ is a global maximum. Thus there exists a curve $\phi_{h}:[0,1]\rightarrow\Pi^c$ such that $\phi_h(0) = \pi_{h-1}$, $\phi_h(1) = \pi_{h}$ and $\alpha \mapsto Q^*(s,\phi(\alpha)) = Q^{\phi(\alpha)}(s,\phi(\alpha)) = V^{\phi(\alpha)}(s)$ is is non-decreasing. Denote by $\nu_{n-h}$ the visitation measure at horizon $h$ (time $n-h$) arising from $\pi_0$ and $\mu'$. Since $s\in \mathcal{S}_{n-h}$ could be chosen arbitrary, the map $\alpha \mapsto J_{\mu'}^{\phi(\alpha)} = \int_{\mathcal{S}}V_{n-h}^{\phi_{h}(\alpha)}(s)d\nu_{n-h}(s)$ is non-decreasing (due to the monotony of the Lebesgue integral). The curve $\phi$ is the concatenation of 
curves $\phi=\oplus_{i=1}^n \phi_i$.

\emph{2.} Fix $s$ as in the theorem. Wlog.~$Q^*(s,\cdot)$ has two non-degenerate local maxima. Then there exist three actions $a_0 < a_2 < a_1$ such
that $\min \{Q^*(s,a_0), Q^*(s,a_1)\} > Q^*(s,a_2)$. There might be no optimal $\pi^*\in\Pi^c$, so we choose $\pi_0\in\Pi^c$ with $\pi_0(s) = a_0$ and that is close enough to $\pi^*$, i.e.~$Q^{\pi_0}(s,a_0) > Q^{*}(s,a_0)-\epsilon$ for some $\epsilon>0$ (e.g.~$\epsilon < (\min \{Q^*(s,a_0), Q^*(s,a_1)\}- Q^*(s,a_2))/3$). We choose $\pi_1$ in a similar fashion. Let $\phi:[0,1] \rightarrow \Pi^c$ be any continuous curve of policies connecting $\pi_0$ and $\pi_1$. By the intermediate value theorem there exists $\alpha_2 \in (0,1)$ such that $\phi(\alpha_2)(s) = a_2$. This means that $Q^{\phi(\alpha_2)}(s,a_2) \leq Q^*(s,a_2)$. Let $\mu' = \delta_s$ be the Dirac measure at $s$. Then $J_{\mu'}^{\phi(0)} = V^{\pi_0}(s) = Q^{\pi_0}(s,a_0)$, $J_{\mu'}^{\phi(1)} = V^{\pi_1}(s) = Q^{\pi_1}(s,a_1)$
and $J_{\mu'}^{\phi(\alpha_2)} = Q^{\phi(\alpha_2)}(s,a_2) \leq  Q^*(s,a_2)$. As this reasoning remains valid for any continuous $\phi$ from $\pi_0$ to $\pi_1$ there exists $\alpha_2(\phi)$ so that
$$
\min \{J_{\mu'}^{\pi_0}, J_{\mu'}^{\pi_1}\}- J_{\mu'}^{\phi(\alpha_2(\phi))} > 
\min \{Q^*(s,a_0), Q^*(s,a_1)\}- Q^*(s,a_2)-\epsilon,
$$
which witnesses the multimodality of $\pi\mapsto J_{\mu'}^\pi$.
\end{proof}

Taken together Thm.~\ref{thm:convex} and Thm.~\ref{thm:unimodal} imply that under the assumptions of Thm.~\ref{thm:convex} deterministic policy search optimizes an unimodal objective function. In this case we can expect gradient methods to identify the optimal policy. However, if the assumptions are violated, our examples show that $J_\mu^\pi$ might become multimodal or the domain of $Q^*$ might become disconnected, posing potential problems for gradient methods. Stochastic gradient methods can escape local optima due to randomness, but this can also lead from global to local optima. Heuristic approaches like simulated annealing aim for global optima, though their success depends on hyperparameters and the optimization landscape, making them domain-specific in general. These challenges of deterministic policy search algorithms highlight the importance of noise in exploration - the described limitations do not apply to stochastic policy-based methods like AlphaZero or off-policy methods like Deterministic Policy Gradient (DPG)\footnote{Consistent with our findings, \cite{silver2014deterministic}~noted that on-policy DPG suffers from insufficient exploration.}. Finally, as typical NN architectures are designed to approximate continuous functions, both stochastic and deterministic methods that use NNs to represent policies suffer from a lack of expressiveness when optimal policies are not continuous.

\section{Experimental Findings}\label{sec:experiments}
To support the theoretical considerations of the previous chapters, we conduct experiments with DH and AlphaZero-based architectures in scenarios with {non-concave} ${Q}$-functions. In Sec.~\ref{sec:minimalEnvs}, we study simple archetypes of planning problems in stochastic environments. Sec.~\ref{sec:toyMarketEnvs} contains experiments with prototypical replication problems in simulated markets. Sec.~\ref{sec:sampleEfficiency} concludes the comparison by measuring the consumption of market data within DH and MuZero training processes.

\subsection{Deep hedging \& AlphaZero in minimal environments with multi-modal rewards}\label{sec:minimalEnvs}

\subsubsection{Learning a deterministic sequence of actions}
\label{se:learningDeterministic}
Consider an agent learning a predefined sequence of five consecutive actions $[-0.5,0.5,-0.5,0.5,-0.5]$ from an action space $\mathcal{A}=\{-1.0,-0.9,...,1.0\}$  ($n=4,\adim=1$), with states represented as in Equ.~\eqref{eq:simpleState}. Since $t_k$ is part of the state $s_k$, this corresponds to a simple assignment $t_k\mapsto a_k$, $k=0,...,4$. Rewards are computed using the bi-modal reward function $r$ in Fig.~\ref{fig:reward_plot}. For $k=0,2,4$ the reward is $r_k(x)=r(x)$ and for $k=1,3$ it is $r_k(x)=r(-x)$. In other words the positioning of the larger mode is such that it corresponds to the predefined sequence of optimal actions. For training only the accumulated reward is granted to the agent after each episode. Fig.~\ref{fig:histogram_plot:one} shows a histogram of the number of correct actions taken over $100$ independent training cycles (with Gaussian NN initialization) for both (domain-adapted) DH and AlphaZero. DH identifies the optimal sequence of actions in $\sim10\%$ of cases, as compared to AlphaZero, which identifies the sequence in $\sim85\%$ of cases. DH was subject to extensive hyperparameter tuning, whereas AlphaZero was used with a standard, untuned architecture. For details on the environment and training, see App.~\ref{app:learningDeterministic}. This environment reveals how gradient descent can converge to local optima in sequential decision tasks. If DH initially explores actions near $0.5$, it may settle there and miss the higher reward at $-0.5$ - a bias that persists throughout the decision sequence.
\begin{figure}[h]
    \begin{subfigure}[b]{0.33\linewidth}%
        \includegraphics[width=1.0\linewidth, trim=4mm 0mm 2mm 0mm, clip=True]{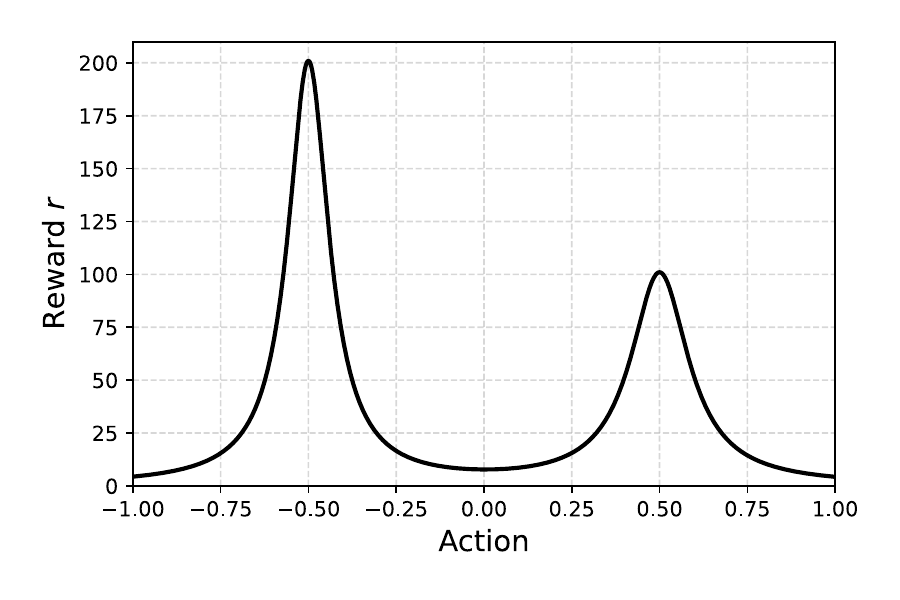}%
        \caption{Reward function.}%
        \label{fig:reward_plot}%
    \end{subfigure}%
    \begin{subfigure}[b]{0.33\linewidth}%
        \includegraphics[width=1.0\linewidth, trim=4mm 0mm 2mm 0mm, clip=True]{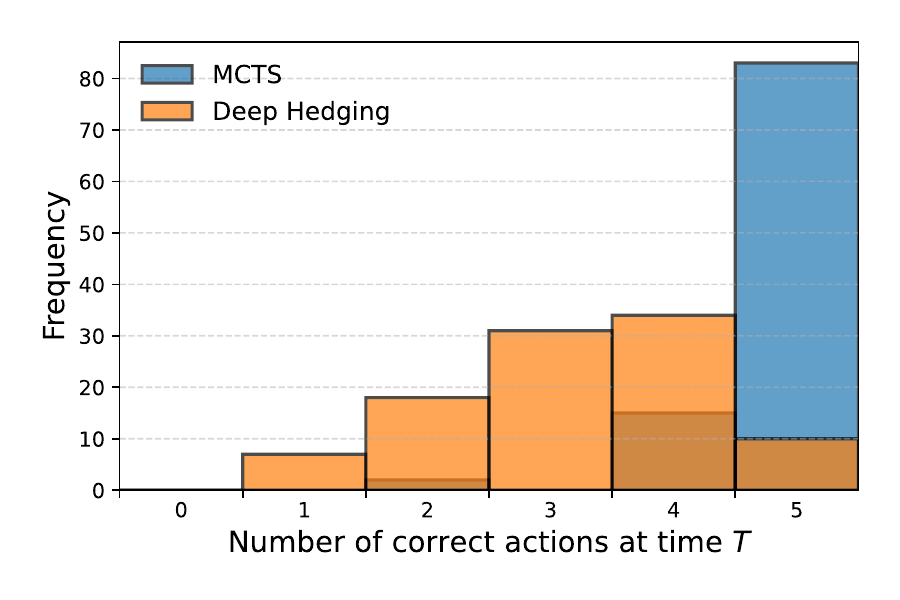}%
        \caption{Hist.~of correct actions.}%
        \label{fig:histogram_plot:one}%
    \end{subfigure}%
    \begin{subfigure}[b]{0.33\linewidth}%
        \includegraphics[width=1.0\linewidth, trim=4mm 0mm 2mm 0mm, clip=True]{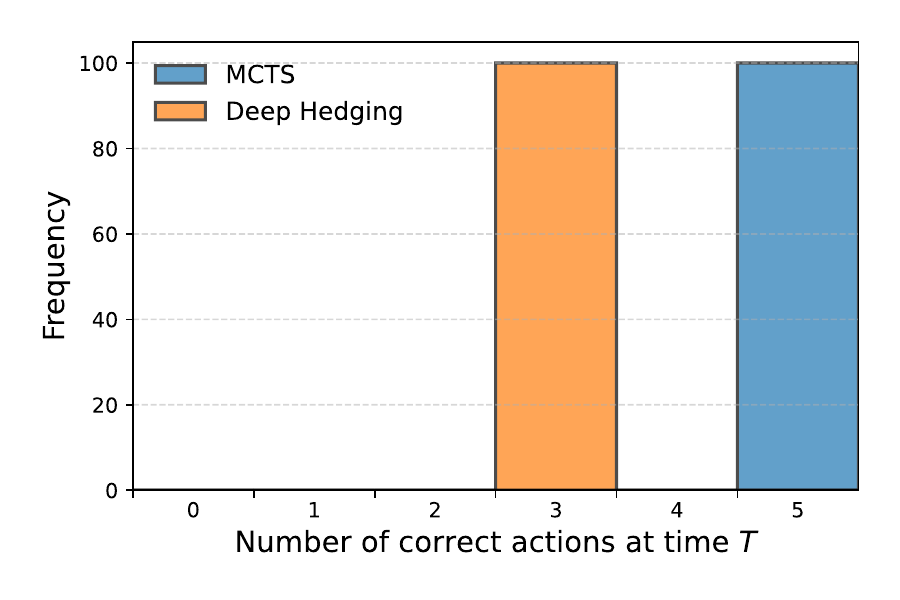}%
        \caption{Hist.~of correct actions.}%
        \label{fig:histogram_plot:two}%
    \end{subfigure}\\
    \caption{Learning deterministic assignments with bi-modal reward signals.}
    \label{fig:combined_plots}
\end{figure}
\subsubsection{Learning a deterministic portfolio composition}\label{se:learningKnown}
As a further illustration, we modify the setup of Sec.~\ref{se:learningDeterministic} to mimic the learning of a simple function that maps market state to optimal action (analogous to learning $\delta$ in the Black-Scholes framework). We choose a uniformly random market $X_k\in\{-0.5,0.5\}$ ($\adim=1, k=0,...,4$). If $X_k=-0.5$ the reward $r_k(x)=r(x)$ is chosen (as in Fig.~\ref{fig:reward_plot}) and if $X_k=0.5$ then $r_k(x)=r(-x)$. The optimal sequence of actions is then $a_k=X_k$. Fig.~\ref{fig:histogram_plot:two} shows a histogram of correct actions taken over $100$ independent training cycles (Gaussian NN initialization) for both DH and AlphaZero. Despite hyperparameter-tuning DH identifies the optimal sequence of actions in $0\%$ of cases, as compared to AlphaZero, which identifies the sequence in $100\%$ of cases, see App.~\ref{app:learningKnown} for experimental details.
\subsection{Deep hedging \& AlphaZero in toy examples of market models with non-concave $Q^*$}\label{sec:toyMarketEnvs}
We consider a prototypical setup for portfolio replication. At time $t=0$ an investor sells a European call option at price $p_0$, with a future payoff of $Z_T=-\max\{X_T-K,0\}$ (with $\adim=1$). The replication portfolio is dynamically adjusted to offset $Z_T$ by allocating a number $\delta_k$ of shares $X_k$ according to the portfolio equation~\eqref{PLEvolution}. We seek an exact match for $Z_T$, which is achieved by minimizing the mean-squared loss in~\eqref{eq:targetProblem}.

\subsubsection{Learning portfolio replication in a trinomial market with non-convex costs}\label{sec:toyMarketEnvs:trinomialAbsorbing}
We construct a replication problem with a {non-concave} $Q^*$ that is simple enough to be solved explicitly by dynamic programming. We set $X_0=K=5$, $p_0=0.4$ and we represent states as in~\eqref{eq:simpleState}, where we assume that $X_k\in\{1,2,3,...,9\}$. The agent makes five ($n=4$) consecutive portfolio adjustments choosing actions $\delta_k\in\mathcal{A}=\left\{0/20,1/20,...,19/20\right\}$. We assume that the stochastic process $X_0,X_1,...,X_4$ is Markovian with transition probability matrix given in Fig.~\ref{fig:trinomialMarket:c}. Transaction costs are of the form $c_k=\min\{0.25\cdot|{\Delta\delta_k}|,0.05\}$. Fig.~\ref{fig:trinomialMarket:a} presents a plot of values of $Q^*$ (at the initial state) at $t_0$ computed exactly using dynamic programming. Fig.~\ref{fig:trinomialMarket:b} shows histograms of the action choices at $t_0$ of trained DH and AlphaZero agents obtained from $100$ independent training cycles. DH identifies the correct mode ($12/20$) in $\sim26\%$ of cases, as compared to AlphaZero, which identifies the correct mode in $100\%$ of cases. For DH the observed frequencies of mode-choice reflect the probability distribution of selecting initial conditions (for stochastic gradient descent) when the NN is initialized with random Gaussian weights. This suggests that DH identifies the optimal mode when its NN has been initialized with parameters that lead to the correct optimal selection at the onset.
\begin{figure}[h]
    \centering
    \begin{subfigure}[b]{0.5\linewidth}%
        \centering%
        \includegraphics[width=1.0\linewidth]{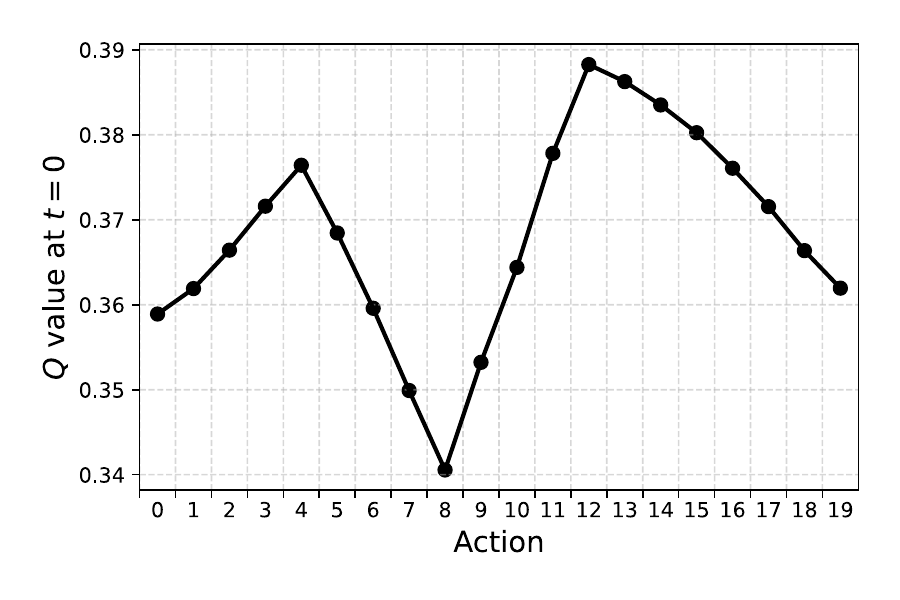}%
        \caption{$Q^*$-function at $t_0$.}%
        \label{fig:trinomialMarket:a}%
    \end{subfigure}%
    \begin{subfigure}[b]{0.5\linewidth}%
        \centering%
        \includegraphics[width=1.0\linewidth]{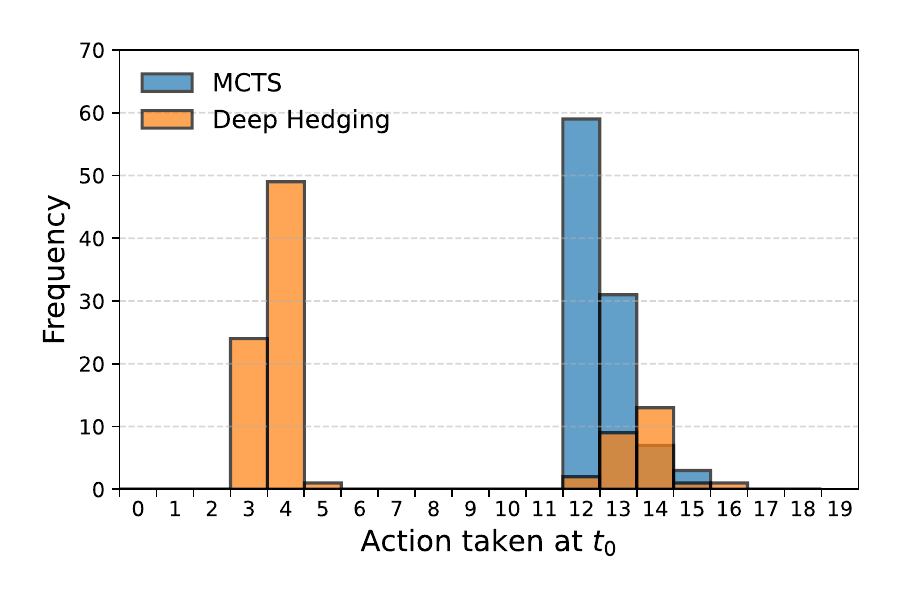}%
        \caption{Hist.~of actions chosen at $t_0$.}%
        \label{fig:trinomialMarket:b}%
    \end{subfigure}\\[2mm]
    \begin{subfigure}[b]{\linewidth}
    \begin{center}%
        $$%
            \begin{pmatrix}
            0.8 & 0.2 & 0 & 0 &\cdots & 0 & 0\\
            0.2 & 0.6 & 0.2 & 0 &\cdots  & 0 & 0\\
            0 & 0.2 & 0.6 & 0.2 &\cdots  & 0 & 0\\
            0 & 0 & 0.2 & 0.6 &\cdots  & 0 & 0\\
            \vdots & \vdots & \vdots & \vdots & \ddots & \vdots &\vdots\\
            0 & 0 & 0 & 0 & \cdots & 0.6 & 0.2\\
            0 & 0 & 0 & 0 &\cdots & 0.2 & 0.8\\
            \end{pmatrix}
        $$%
    \end{center}
    \vspace{0.3cm}
        \caption{Transition probabilities}
        \label{fig:trinomialMarket:c}
    \end{subfigure}
    \caption{Learning optimal actions in a market with bi-modal $Q^*$-function at init.~state.}
    \label{fig:trinomialMarket}
\end{figure}

\noindent
\begin{minipage}[t]{0.49\linewidth}%
    \centering%
    \includegraphics[width=\linewidth]{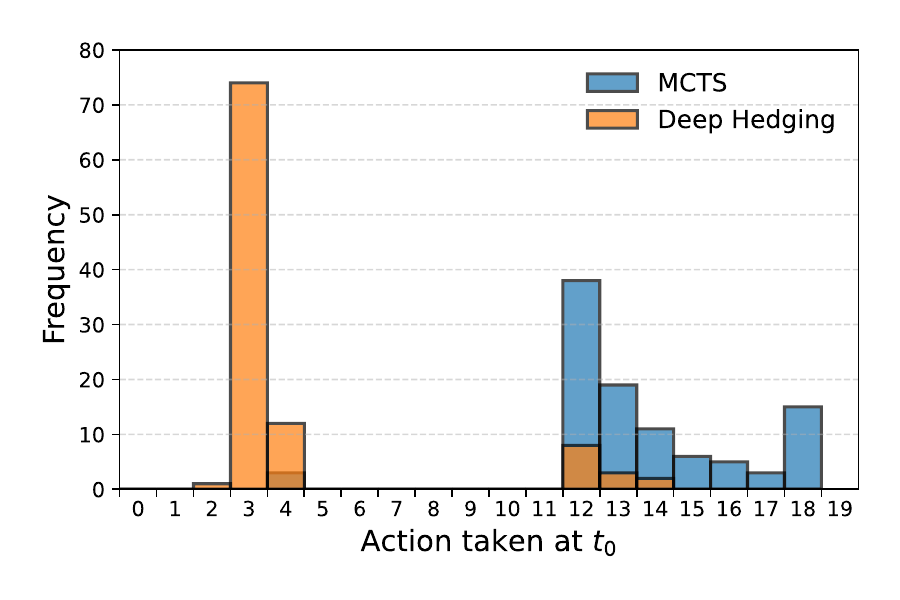}%
    \captionof{figure}{Hist.~of actions chosen at $t_0$.}%
    \label{fig:gbmMarket}%
\end{minipage}%
\hspace{0.02\linewidth}
\begin{minipage}[t]{0.49\linewidth}%
    \centering%
    \includegraphics[width=\linewidth]{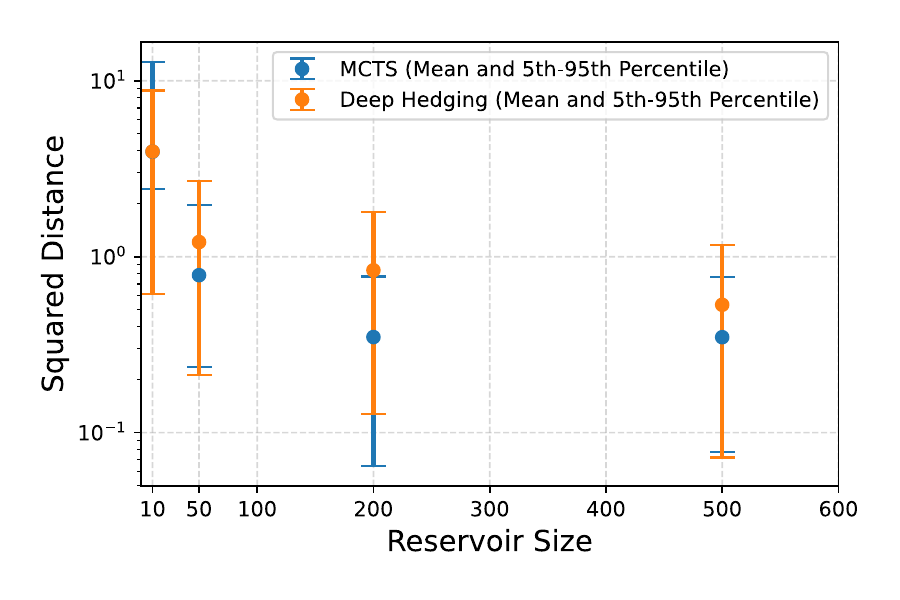}%
    \captionof{figure}{MuZero \& DH learning processes.}%
    \label{fig:small_reservoir_plot}%
\end{minipage}
\subsubsection{Learning portfolio replication in a GBM market with non-convex costs}\label{sec:toyMarketEnvs:gbmCosts}
To evaluate DH and AlphaZero in large state spaces, we revisit the setup of Sec.~\ref{se:learningDeterministic}, replacing the discrete 9-state market model with a continuous Geometric Brownian Motion (GBM) model~\cite{Musiela2005}. The GBM parameters, $\mu = 0.03125$ and $\sigma = 0.25$, are chosen to approximate the bimodal $Q^*$ in Fig.~\ref{fig:reward_plot}. The continuous state space is discretized by rounding asset prices to two decimal places. Additional implementation details are provided in App.~\ref{app:toyMarketEnvs:gbmCosts}. Fig.~\ref{fig:gbmMarket} shows histograms of the action choice at $t_0$ of trained DH and AlphaZero
agents obtained from $100$ independent training cycles.
DH identifies the correct mode in $\sim 13\%$ of cases, as compared to AlphaZero, which identifies the correct mode in $\sim 97\%$ of cases. %
\subsubsection{Learning portfolio replication in a trinomial market with trading constraints}\label{sec:toyMarketEnvs:trinomialWithConstraint}
We consider a setup, where the domain of $Q^*$ is disconnected due to cash constraints. The agent makes five consecutive portfolio adjustments with $\mathcal{A}=\{0,2/40,...,78/40\}$ in a trinomial market~\cite{Musiela2005} with costs $c_k=\min\{12.5\cdot|{\Delta\delta_k}|,2.5\}$. We replace the quadratic loss by an exponential utility $u(x)=-2e^{-0.5x}$ and we impose lower and upper bounds on the cash holdings $b_{min}= 0, b_{max}=8$. Fig.~\ref{fig:brokenQ:a} shows the $Q^*$ (at the initial state) computed using dynamic programming. Fig.~\ref{fig:brokenQ:b} shows a heat map of $Q^*$ (init.~cash $y$-ax., init.~act.~$x$-ax.); white indicates state-action pairs violating constraints, highlighting the non-convexity of the feasible domain. Fig.~\ref{fig:brokenQ:c} shows a histogram of chosen actions. Details are provided in App.~\ref{ap:couterexamples} and~\ref{app:toyMarketEnvs:trinomialWithConstraint}.
\begin{figure}[h]
    \centering%
    \begin{subfigure}[b]{0.5\linewidth}%
        \centering%
        \includegraphics[width=\linewidth]{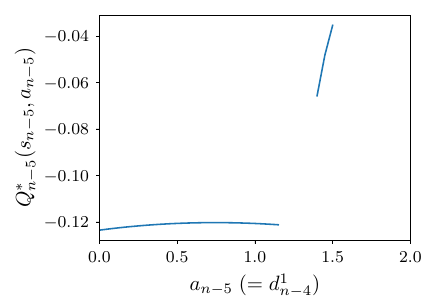}%
        \caption{$Q^*$-function at $t_0$.}%
        \label{fig:brokenQ:a}%
    \end{subfigure}%
    \begin{subfigure}[b]{0.5\linewidth}%
        \centering%
        \includegraphics[width=\linewidth]{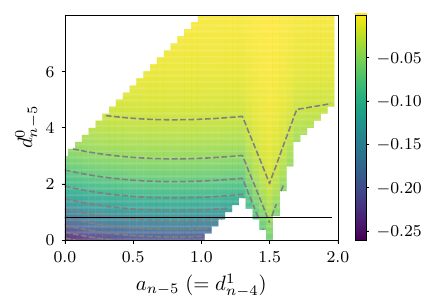}
        \caption{Heat map of $Q^*$ at $t_0$.}%
        \label{fig:brokenQ:b}%
    \end{subfigure}
    \begin{subfigure}[b]{0.5\linewidth}%
    \centering%
        \includegraphics[width=\linewidth]{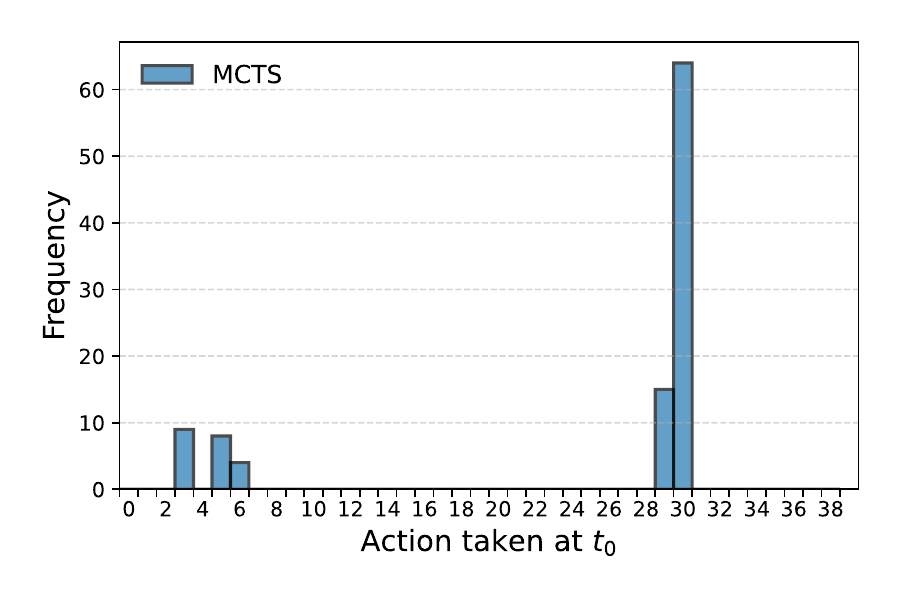}%
        \caption{Hist.~of actions chosen at $t_0$.}%
        \label{fig:brokenQ:c}%
    \end{subfigure}
    \caption{Learning optimal actions in a market, where $Q^*$ has a disconnected domain at init.~state.}
    \label{fig:brokenQ}
\end{figure}
\subsection{Sample efficiency}\label{sec:sampleEfficiency}
One might object that the performance gap observed in Secs.~\ref{sec:minimalEnvs} and~\ref{sec:toyMarketEnvs} arises from an unfair comparison: DH is trained on a reservoir of paths of random market evolution, while AlphaZero samples from the market's transition kernel $\lambda$. It could be argued that AlphaZero has access to more detailed information than DH. To address this, we compare the sample efficiency (see~\cite{szehrcannelli} for a definition) of DH with that of a MuZero variant, see App.~\ref{app:describeMuZero} for a system description and Sec.~\ref{se:financialGamesAndTraining} for MuZero's training procedure. Fig.~\ref{fig:small_reservoir_plot} shows descriptive statistics of terminal losses generated by MuZero and DH agents, trained on reservoirs of market paths with sizes $10, 50, 200, 500$. In these experiments, both systems were trained on exactly the same data. For each reservoir size, we executed $100$ independent learning cycles, sampling a fresh reservoir each time. The agents were trained tabula rasa until they reached their minimal loss. We report the mean, 5th, and 95th percentiles of terminal losses across these trials. MuZero shows a lower sample requirement than DH, consistent with the intuition that MCTS' exploration is targeted, whereas DH's exploration is driven purely by market randomness. For details, refer to App.~\ref{app:sampleEfficiency}.
\section{Conclusions}
Our experiments show that AlphaZero consistently identifies near-optimal replication strategies, outperforming DH in environments with multi-modal $Q^*$. Despite its performance, AlphaZero's high infrastructure complexity limits its scalability, especially in high-dimensional asset spaces, where DH remains more practical. Looking ahead, transformer-based policy architectures may offer a promising middle ground, though their effectiveness in highly stochastic settings remains an open challenge.
Looking ahead, transformer-based policy architectures - such as online decision transformers - offer a promising middle ground, though their effectiveness in highly stochastic settings remains an open challenge~\cite{convergenceAndStabilityStrupl}.

\section*{Acknowledgements}

This work has been supported by UBS Switzerland
AG and its affiliates.

\newpage
\bibliographystyle{plainnat}
\bibliography{main}

\newpage
\appendix
\section{Details on algorithms}

\subsection{From UCT to AlphaZero}\label{app:describeAlphaZero}
UCT is a variant of MCTS that employs the UCB1 bandit policy~\cite{auer2002finite} to guide exploration and exploitation within the planning tree. The UCB1 policy selects actions by maximizing an upper confidence bound on the estimated reward:
\begin{equation}
UCB1_a = \bar{R}_a+wc_{N,N_a}\ \textnormal{with}\ c_{N,N_a}=\sqrt{\frac{\ln(N)}{N_a}},
\label{eq:UCB1}
\end{equation}
where $\bar{R}_a$ is the average reward obtained from selecting action $a$, $N$ is the total number of action selections, and $N_a$ is the number of times action $a$ has been taken. The term $\bar{R}_a$ promotes exploitation of actions with high expected rewards, while $c_{N, N_a}$ encourages exploration of less-visited actions. The scalar $w$ is a problem-specific hyperparameter that balances these two objectives.

AlphaZero is an MCTS variant designed for adversarial game tree search~\cite{Anthony2017, silver2017mastering}. It enhances UCT by incorporating NNs, which are typically used to model both the tree policy and the value function. These NNs serve two key purposes: \emph{(1)} improving the efficiency of tree construction by providing better action priors, and \emph{(2)} estimating the value of leaf nodes, thereby reducing the reliance on Monte Carlo estimation. Augmenting UCT with NNs comes at the cost of losing UCT's theoretical guarantee of convergence to optimal actions. But it offers two key advantages. First, NNs enable faster estimation of state values as compared to full tree evaluation. Second, they provide generalization: whereas UCT requires costly simulations to evaluate unseen states, a trained NN can leverage similarity between states to estimate values efficiently.

Imitation learning aims to train an apprentice policy $\pi^A$ to mimic an expert policy $\pi^E$, typically through supervised learning on a dataset of expert-generated state-action pairs. In AlphaZero, the expert's role is taken by a UCT variant, while the apprentice is represented by a NN. The apprentice network comprises two interrelated components: \emph{1)} The policy network is trained on tree statistics targets generated by the tree policy. Writing $N_{s,a}$ for the number of times action $a$ has been chosen from state $s$ during MCTS search, and $N_{s} = \sum_a N_{s,a}$ for the total number of visits to $s$, the supervised loss for this task is:
\begin{align*}
Loss_{TPT}=-\sum_{a}\frac{N_{s,a}}{N_s}\log{\pi^{A}(a|s)}.
\end{align*}
In turn a modified UCB1 score incorporates the network's prior to guide tree search towards stronger actions at each state by maximizing:
\begin{align*}
AlphaZero\_ UCB1_a=\hat{Q}(s,a)+w\pi^A(a|s)\frac{\sqrt{\ln(N_s)}}{N_{s,a}+1},
\end{align*}
where $\hat{Q}(s,a)$ is an estimate of action-values (computed as the  mean reward $\bar{R}_a$ from $s$).
\emph{2)} The value network estimates the expected game outcome $V^A(s)$ from a given state $s$. MCTS provides a scalar training target $z$ (e.g., the final game result), and the supervised loss is:
\begin{align*}
Loss_{V}=-(z-V^A(s))^2.
\end{align*}
When a simulation reaches a terminal state, the resulting trajectory is stored in a replay buffer. Alongside each state, the buffer also records the observed reward, the network’s value estimate, and the action probabilities generated during tree search. The network is updated via stochastic gradient descent by aligning its outputs with these recorded targets. To regularize learning and encourage efficient representation sharing, both the policy and value networks are typically trained jointly via a multitask architecture, whose loss is simply the sum of $Loss_V$ and $Loss_{TPT}$. While AlphaZero is originally trained through self-play — where one instance competes against another — in portfolio replication, we treat AlphaZero as a pure RL algorithm trained through interaction with a market environment rather than via self-play.
\subsection{MuZero}\label{app:describeMuZero}

MuZero~\cite{schrittwieser2020mastering} is an extension of AlphaZero that eliminates the need for an explicit environment simulator. Instead of sampling from a known transition model $\tkernel$, MuZero learns an internal dynamics model and uses this model to simulate state-action transitions in a manner similar to AlphaZero's usage of $\tkernel$. MuZero is best described as a latent-model-based reinforcement learning model, as it performs planning using a latent representation of the environment rather than directly interacting with the true environment. In its original formulation, MuZero introduces three key components that augment the AlphaZero architecture:

\begin{itemize}
\item \emph{Representation function}: $h$ maps an environment state $s_t$ (or observation history in the partially observable case) to a latent state $\tilde{s}_t = h(s_t)$. This latent representation facilitate planning by reducing the computational and memory burden associated with operating directly on complex and high-dimensional states.

\item \emph{Dynamics function}: Given a latent state $\tilde{s}_t$ and action $a_t$, $g$ predicts the next latent state and an intermediate reward signal: $(\tilde{s}_{t+1}, \tilde{r}_{t+1}) = g(\tilde{s}_t, a_t)$. This function is learned from observation and takes the role played by $\tkernel$ in AlphaZero.

\item \emph{Prediction function}: Given a latent state $\tilde{s}_t$, the function $f$ outputs a value estimate and a policy distribution over actions: $(v_t, p_t) = f(\tilde{s}_t)$.
\end{itemize}

In our experiments with MuZero, we consider a modified setting. One simplification is that the environment states are sufficiently low-dimensional and structured such that a separate representation function is not required, $h=\mathrm{id}_{\mathcal{S}}$. Moreover, the absence of market impact implies that actions do not affect the evolution of the market state. The market is thus modeled as a process $X_k$ evolving independently of actions and other state components, with dynamics governed by a Markov kernel $\lambda_x(x_{k+1}|x_k)$.  Accordingly, MuZero learns only the dynamic function $g_x$ - an estimate of $\lambda_x$ - since the transitions of the remaining state components, given the market values, are known and deterministic.
To account for the learning of the dynamic function MuZero employs a modification of AlphaZero's tree policy maximizing:
\begin{align*}
MuZero\_ UCB1_a=\hat{Q}(s,a)+\pi^A(a|s)\frac{\sqrt{\ln(N_s)}}{N_{s,a}+1}\left(w_1+\ln\left(\frac{N_s+w_2+1}{w_2}\right)\right),
\end{align*}
where the constants $w_1,w_2$ weight the influence of the prior $\pi^A$ over that of $Q(s,a)$ as number of visits in $s$ increases. MuZero stores sequences of observations, action probabilities, and rewards in a replay buffer. During training, MuZero samples sequences from the buffer and for each sampled sequence, the networks are trained to match predicted values, rewards, and policies to the observed targets. Compared to AlphaZero, MuZero introduces an additional loss term corresponding to the prediction of immediate rewards. The total loss combines value, reward, and policy losses across all unroll steps, along with an optional regularization term on model parameters. By learning both dynamics and planning simultaneously, MuZero generalizes AlphaZero’s approach to a broader class of decision-making problems, including those without known simulators. In our implementation, however, the market dynamics evolve independently of actions and other state components and are governed by a given Markov kernel. As a result, MuZero only learns an estimate of this kernel, and no exploration is required in learning the market dynamics model.

\section{Details on Mathematical Derivations and Proofs}
\subsection{On the Existence of a Borel-Measurable Optimal Policy}
\label{ap:measurableoptpi}

In this appendix, we provide additional details for the interested reader regarding the gaps $\inf_{\pi\in\Pi^c} \|V^*-V^{\pi}\|_{\infty} > 0$ and $\inf_{\pi\in\Pi^c} \|Q^*-Q^{\pi}\|_{\infty} > 0$, cf.~Sec.~\ref{se:backgroundAndSetting}. These were omitted from the main text due to their technical nature. RL problems with continuous state and action spaces discussed are treated in full generality in \cite{bertsekas1996stochastic}, which shows that in many RL problems, no Borel measurable optimal policies exist—even when rewards and transition kernels are Borel measurable. To address this, \cite{bertsekas1996stochastic} introduces the broader class of universally measurable policies, where optimal policies can be found. We restricted the RL problems under consideration to avoid obscuring central arguments that focus on relatively straightforward convexity arguments and curve properties with technical measurability and topology considerations.  Nonetheless, to offer a refined perspective, we briefly illustrate the lines of reasoning encountered in more general settings. Specifically, recall that if the action space $\mathcal{A}$ is compact and convex, rewards are continuous, and the transition kernel is weakly continuous, then the state- and action-value functions are well-defined and continuous. Assuming for simplicity that $\mathcal{A} \subset \mathbb{R}$, we show that there exists a lower semi-continuous (LSC) deterministic optimal policy $\pi^* : \mathcal{S} \rightarrow \mathcal{A}$.

\begin{lemma}
\label{le:infargmaxLSC}
Let $(X,d)$ be a metric space, $Y\subset\mathbb{R}$ a compact subset and $Q:X\times Y \rightarrow \mathbb{R}$ a continuous function. Then the function defined as $f(x) := \inf \argmax Q(x,\cdot)$ for all $x \in X$ is lower semi-continuous.
\end{lemma}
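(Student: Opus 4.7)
The plan is to verify lower semi-continuity through its sequential characterization: it suffices to show that for every $x\in X$ and every sequence $x_n\to x$, one has $\liminf_{n} f(x_n)\geq f(x)$. Writing $M(x):=\max_{y\in Y}Q(x,y)$ and $A(x):=\{y\in Y:Q(x,y)=M(x)\}$, the route is to first establish that $A(x)$ is a nonempty compact subset of $Y$ (so that $f(x)=\min A(x)$ is actually attained), then that $M$ is continuous on $X$, and finally to use a subsequence argument to locate any limits of the selectors $y_n:=f(x_n)$ inside $A(x)$.

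First I would record that for each $x$, continuity of $Q(x,\cdot)$ together with compactness of $Y$ gives nonempty compact $A(x)\subset Y$, so the infimum in the definition of $f$ is attained. Next, I would establish that $M$ is continuous on $X$: for $x_n\to x$, joint continuity of $Q$ and compactness of $Y$ allow an $\varepsilon$-net argument on $Y$ to upgrade pointwise convergence $Q(x_n,y)\to Q(x,y)$ to uniform convergence in $y$, whence $M(x_n)\to M(x)$ (this is the classical Berge maximum theorem in the form needed here). With $M$ continuous, fix $x_n\to x$ and set $y_n:=f(x_n)\in Y$. By compactness of $Y$, any subsequence of $(y_n)$ admits a further subsequence $y_{n_k}\to y^*$. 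Passing to the limit in the identity $Q(x_{n_k},y_{n_k})=M(x_{n_k})$, using joint continuity of $Q$ on the left and continuity of $M$ on the right, yields $Q(x,y^*)=M(x)$, i.e.~$y^*\in A(x)$, and hence $y^*\geq \min A(x)=f(x)$. Since every subsequential limit of $(y_n)$ is bounded below by $f(x)$, we conclude $\liminf_n y_n\geq f(x)$, which is the desired LSC property.

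The main technical obstacle is the passage to the limit $Q(x_{n_k},y_{n_k})\to Q(x,y^*)$, which genuinely requires joint continuity of $Q$ on $X\times Y$ rather than separate continuity in each variable; the compactness of $Y$ is what makes the underlying uniformity work and simultaneously guarantees both that $A(x)$ is nonempty and that $(y_n)$ admits convergent subsequences. The hypothesis $Y\subset\mathbb{R}$ is used only to give meaning to $\inf A(x)$ via the ambient order and to turn it into a minimum on the compact set $A(x)$; no further one-dimensional feature of $\mathbb{R}$ is exploited, which is consistent with the remark in the main text that even when this LSC selector exists, no \emph{continuous} selector need exist.
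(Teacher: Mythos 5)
Your proof is correct and follows essentially the same route as the paper's: both arguments pass to a convergent subsequence of the maximizers $f(x_n)\in\argmax Q(x_n,\cdot)$, use continuity of $x\mapsto\max_y Q(x,y)$ together with joint continuity of $Q$ to show the limit lies in $\argmax Q(x,\cdot)$, and conclude it is at least $f(x)$. The only differences are cosmetic (you argue directly via $\liminf$ where the paper argues by contradiction, and you justify continuity of the max via uniform convergence where a subsequence argument also suffices).
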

Since any {bounded} LSC function is the pointwise limit of bounded Lipschitz functions~\cite{pollard2001user}, the policy $\pi^*$ is Borel measurable. Therefore, within our restricted - yet practically relevant - setting, invoking universal measurability offers no additional benefit.
Note that no the gaps are present if $Q^*(s,\cdot)$ is unimodal for all $s\in\mathcal{S}$. Moreover if $|\argmax Q^*(s,\cdot)|=1$ for all $s\in\mathcal{S}$ then $\pi^* \in \Pi^c$. However, when $Q^*(s,\cdot)$ is multimodal, value gaps may persist. These gaps can sometimes be eliminated by imposing additional assumptions - typically domain-specific in nature. We found that restricting
to price process kernels $\lambda_x$ that are absolutely continuous with respect to Lebesgue measure together with compactness on $\mathcal{S}$ still allows for a wide range of practically relevant replication domains in which $\inf_{\pi \in \Pi^c} \|Q^*-Q^{\pi}\|_{\infty} = 0$. Finally, if one does not require uniform approximation over the entire domain, classical results such as Luzin's theorem and its variants may be applied to justify approximation on large subsets of $\mathcal{S}$. However, a key challenge remains: NNs with continuous activation functions can only represent continuous functions, while $\pi^*$, though Borel measurable and LSC, may not be continuous. This creates a mismatch when approximating optimal policies.

\begin{proof}[Proof of Lem.~\ref{le:infargmaxLSC}]
Recall that $f:(X,d) \rightarrow \mathbb{R}$ is LSC if and only if for each
$t\in\mathbb{R}$ the pre-image $f^{-1}((t,+\infty))$ is open. Fix $t$ and $x \in f^{-1}((t,+\infty)) = \{ x \mid f(x) > t\}$. It suffice to show that there exists an open neighborhood of $x$ which is also
contained in the pre-image. Since $X$ is a metric space, {this is equivalent to the following:} for each sequence $(x_n)$,
$x_n \rightarrow x$ there exists $n_0$ such that for all $n  > n_0$,
$f(x_n) > t$. It suffice to show that for each sequence $x_n \rightarrow x$
the accumulation points of $(f(x_n))$ are not smaller than $f(x)$.
Fix a sequence $x_n \rightarrow x$. For the sake of contradiction
assume that $(f(x_n))$ has a limit point $a < f(x) = \inf \argmax  [Q(x,\cdot)]$. {Without loss of generality,} denote the
subsequence of $(x_n)$, which has the limit $a$
also by $(x_n)$. Since $f(x_n) = \inf \argmax [Q(x_n,\cdot)]$ there exist a sequence
$(a_n)$, $a_n \in \argmax [Q(x_n,\cdot)]$ such that $a_n \rightarrow a$.
Since $x \mapsto \max Q(x,\cdot)$ is continuous we obtain
$Q(x_n,a_n) = \max Q(x_n,\cdot) \rightarrow \max Q(x,\cdot)$.
Further since $Q$ is continuous we obtain
$Q(x_n,a_n) \rightarrow Q(x,a)$. Taken together this yields
$Q(x,a) = \max Q(x,\cdot)$, i.e., $a \in \argmax [Q(x,\cdot)]$.
But $a < f(x) = \inf \argmax [Q(x,\cdot)]$ a contradiction.
\end{proof}

\subsection{Proofs of Lemmas in Convex Optimization}~\label{app:convexLemmas}
The proof of Thm.~\ref{thm:convex} uses the following two elementary lemmas.
\concavecomp*

\begin{proof}
During this proof we will write $x=(x_1,x_2)$ for the argument of $f$, where $x_1 \in \mathbb{R}$ stands 
for the first argument of $f$ and $x_2$ for the rest. Similarly, we write $y=(y_1,y_2)$ for the argument of $g$. Let $(y,x_2),(y',x_2') \in \mathbb{R}^{m+n-1}$, $\alpha \in [0,1]$. Using first that $g$ is concave and that $f$ is increasing and then that $f$ is concave, we find
\begin{align*}
f\circ (g\times \mathrm{id}_{\mathbb{R}^{m-1}})
(\alpha(y,x_2)+(1-\alpha)(y',x_2'))
&=
f(g(\alpha y+(1-\alpha) y'),\alpha x_2+(1-\alpha)x_2') 
\\
&\geq 
f( \alpha g(y)+(1-\alpha)g(y'), \alpha x_2+(1-\alpha)x_2')
\\
&=
f( \alpha (g(y),x_2)+(1-\alpha)(g(y'),x_2'))
\\
&\geq
\alpha f(g(y),x_2)+(1-\alpha)f(g(y'),x_2').
\end{align*}
This shows that $f\circ (g\times \mathrm{id}_{\mathbb{R}^{m-1}})$ is concave. To prove that $f\circ (g\times \mathrm{id}_{\mathbb{R}^{m-1}})$ is increasing in the first argument let us fix $(y_2,x_2) \in \mathbb{R}^{m+n-2}$ and $y_1 < y_1'\in \mathbb{R}$. Since
$g$ is increasing in the first argument we get $g(y_1,y_2) < g(y_1',y_2)$ and further since $f$ is increasing in the first argument we get $f(g(y_1,y_2),x_2) < f(g(y_1',y_2),x_2)$.
\end{proof}

\concavemax*
\begin{proof}
Fix $x',x'' \in \mathbb{R}^n$ and 
$\alpha \in [0,1]$. Fix $y' \in \argmax_{y} f(x',y)$ and $y'' \in \argmax_{y} f(x'',y)$.
Define $(x''',y''') := \alpha (x',y') + (1-\alpha) (x'',y'')$.
From concavity of $f$ and definition of $g$ we have
$$
g(x''') = \max_{y} f(x''',y)  \geq f(x''',y''') \geq \alpha f(x',y') + (1-\alpha) f(x'',y'') = \alpha g(x') + (1-\alpha) g(x''),
$$
which proves that $g$ is concave.
Now assume that $f$ has the described increasing property and let
$x_1 < x_1' \in \mathbb{R}$, $(x_2,y) \in \mathbb{R}^{n+m-1}$ then
$$
g(x_1,x_2) = \max_y f(x_1,x_2,y) \leq \max_y f(x_1',x_2,y) = g(x_1',x_2).
$$
\end{proof}

\subsection{Proof of Equivalence of Unimodality}
\label{ap:equivalence}

To prove point \emph{1.} of Thm.~\ref{thm:unimodal}, we made the simplifying assumption $\pi^* \in \Pi^c$. Here we explain when this assumption is satisfied and how the general case $\pi^* \notin \Pi^c$ is treated.
First note that if $Q^{*}(s,\cdot)$ is unimodal for each $s\in \mathcal{S}$ and when $|\argmax Q(s,\cdot)|=1$ for all $s\in\mathcal{S}$ then $\pi^* \in \Pi^c$.
In other words, it is sufficient to assume apart from unimodality that maxima are non-degenerate, which covers many situations of practical interest. In the general case $\pi^* \notin \Pi^c$ one can prove relaxed but still practical variations of the result.
For example, if $\pi_0\in \Pi^c$, $\pi^*\notin \Pi^c$
one can approximate $\pi^*$ with $\pi^*_{\epsilon}$ so that
$\|Q^*-Q^{\pi^*_\epsilon}\|_{\infty} < \epsilon$ for some suitably
small $\epsilon >0$. In the light of the proof of Thm.~\ref{thm:unimodal},
$Q^{\pi^*_\epsilon}$ has to stay on non-decreasing curves as $Q^*$ (due to its unimodality)
does. This yields non-decreasing curves up to an $\epsilon$ error and results in a slightly relaxed statement: for each
non-optimal $\pi_0 \in \Pi^c$ and $\epsilon>0$ there exists $\pi^*_{\epsilon} \in \Pi^c$, where $\|Q^*-Q^{\pi^*_\epsilon}\|_{\infty} < \epsilon$, and an up-to-$\epsilon$-non-decreasing curve connecting
$\pi_0$ with $\pi^*_{\epsilon}$. This property is useful from an algorithm design perspective as stochastic gradient methods could ignore $\epsilon$ bumps.

\section{Counterexamples and the Necessity of Assumptions for the Proof of Convexity}
\label{ap:couterexamples}

This illustrates the necessity of the assumptions underlying Thm.~\ref{thm:convex} through a series of experiments. Specifically, we demonstrate that the violation of either the increasing {concave} utility assumption or the convex cost assumption results in the emergence of multimodality in $Q^*(s,\cdot)$ for some state $s$. These findings are presented in Sec.~\ref{ap:nonincutil} and~\ref{ap:nonconvexcost}, respectively. Further, in the Sec.~\ref{ap:boundsconstraint}, we show that introducing bounds on the cash state variable gives rise to a $Q^*(s,\cdot)$ with a disconnected domain of definition, corresponding to a fragmented set of available actions. This fragmentation produces effects analogous to those caused by multimodality: local policy search methods are prone to becoming trapped within isolated regions of the action space.

For brevity we use a slight \lq\lq{}abuse of notation\rq\rq{}. We will omit the last 
state component ($W_k$ in the proof of Thm.~\ref{thm:convex}) and assume
that the costs are absorbed in the \lq\lq{}cash\rq\rq{} random variable $\delta^0_k$.
For brevity we describe the state by $S_k = (k,\delta_k, X_k)$, where the self-financing constraint entails that $\delta_k^0$ is updated according to
\begin{equation}
\delta_{k+1}^0 = \delta_{k}^0 - \delta_k^{1:\adim}X_k^{1:\adim} - c_k(\delta_{k+1}^{1:\adim}-\delta_k^{1:\adim}, X_k^{1:\adim}) =: g_k(\delta_k,\delta_{k+1}^{1:\adim},X_k^{1:\adim})
\label{eq:chashupdate}
\end{equation}

Considering the cash together with the accumulated costs 
in one state component or split across several state components
does not change the problem. In sense of proof of theorem
\ref{thm:convex} the changes are minimal (one suffices with one application of Lem.~\ref{le:concavecomp} in point (i) instead of original two applications).
The theorem holds for both formulations mutatis mutandis.

\subsection{Non-increasing Utility}
\label{ap:nonincutil}

Here we will restrict to single price process $X_k = X_k^{1}$ ($\adim=1$) with only
two states $\{1,2\}$ with probability staying in the same state $0.8$
and transitioning to the other state $0.2$. The action $A_k = \delta_{k+1}^{1}$ (number of shares) can take 20 values uniformly spaced between 0 and 1, precisely $\mathcal{A} = \{0.0, 0.05, \ldots 0,95\}$.
To violate increasing utility assumption of the theorem we choose
the quadratic utility $u(x) = -x^2$, which is still concave.
As convex cost we assume quadratic cost $c(\delta^1_{k+1}-\delta^1_{k}, X_k) = 0.5|(\delta^1_{k+1}-\delta^1_{k}) X_k|^2$. The liability has the form $Z_n = (X_n-K)^+$, where $K=2$.

The figure \ref{fig:couterutilcost:a} shows the optimal action-value function $Q^*(s_{n-1},a_{n-1})$ as function of action $a_{n-1}= \deval_{n}^1$
for the state $s_{n-1} = (n-1, \deval_{n-1},x_{n-1})$, where the cash is $\deval_{n-1}^0 = -0.6$, number of shares held is $\deval_{n-1}^1 = 0.55$ and the price is $x_{n-1}=2.0$.
The bi-modality in action $a_{n-1} = \deval_{n}^1$ is clearly apparent.

\begin{figure}[h]
    \begin{subfigure}[b]{0.5\linewidth}%
        \centering%
        \includegraphics[width=\textwidth]{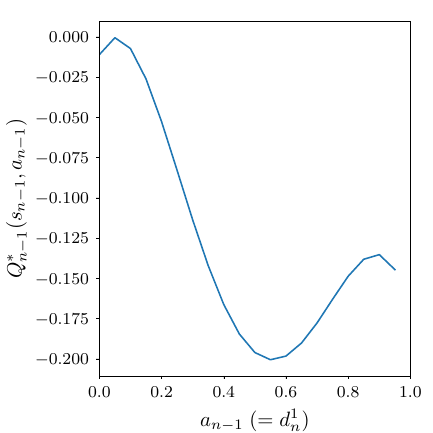}%
        \caption{Violation, non-incr.~utility.}%
        \label{fig:couterutilcost:a}%
    \end{subfigure}%
    \begin{subfigure}[b]{0.5\linewidth}%
        \centering%
        \includegraphics[width=\textwidth]{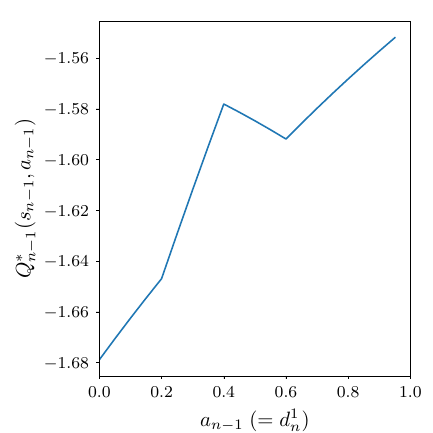}%
        \caption{Violation, non-conv.~cost.}%
        \label{fig:couterutilcost:b}%
    \end{subfigure}
    \caption{Violation of assumptions of Thm~\ref{thm:convex} results in bimodality of the optimal action-value function $Q^*(s,a)$: The figures show $Q^*(s,a)$ for a fixed stat $s$ and varying actions. Fig.~\ref{fig:couterutilcost:a} illustrates the violation of the assumption of increasing concave utility by a quadratic utility. Fig.~\ref{fig:couterutilcost:b} illustrates the violation of the assumption of convex costs by a non-convex cost.}
\label{fig:couterutilcost}
\end{figure}

\subsection{Non-convex costs}
\label{ap:nonconvexcost}
The only change we make to the setting of previous counterexample 
is that we fix the utility by the exponential utility $u(x) = -2e^{-\frac{1}{2}x}$,
which is concave increasing and further we deviate from convex
costs by setting $c(\delta^1_{k+1}-\delta^1_{k}, X_k) = \min\{ 0.25(\delta^1_{k+1}-\delta^1_{k}), 0.05  \}$, which is non-convex but still continuous cost.

The figure \ref{fig:couterutilcost:b} shows the optimal action-value function $Q^*(s_{n-1},a_{n-1})$ as function of action $a_{n-1}= \deval_{n}^1$
for the state $s_{n-1} = (n-1, \deval_{n-1},x_{n-1})$, where the cash is $\deval_{n-1}^0 = 0$, number of shares held is $\deval_{n-1}^1 = 0.4$ and the price is $x_{n-1}=1.0$.
The bi-modality in action $a_{n-1} = \deval_{n}^1$ is clearly apparent.

\subsection{Bounds on cash state variable}
\label{ap:boundsconstraint}
Here we demonstrate that imposing a convex constraint on the cache
state variable $\delta^{0}_k$ can lead to disconnected set of allowed actions.
The constraint is set in form of bounds $b_{min} \leq \delta^{0}_k \leq b_{max}$.
The key idea is that pre-image of convex set (here $[b_{min},b_{max}]$)
in concave map (here $g_k$ in \eqref{eq:chashupdate}) is generally
non convex. The bounds (especially $b_{min}$) are very natural to consider.

The setting for this counter example is the following.
We will consider the same exponential utility as in Sec.~\ref{ap:nonconvexcost}. We will consider two possible costs:
a quadratic cost (the same as in section \ref{ap:nonincutil})
and nonconvex continuous cost (the same as in section \ref{ap:nonconvexcost} only 10 times higher).
We will consider bit larger discretization grids.
The action space is formed by 40 levels uniformly spaced between 0 and 2,
i.e. $\mathcal{A} = \{0, 0.05, \ldots, 1.95 \}$.
We set the bounds on the cache random variable $\delta^0_k$ as $b_{min}=0$ and $b_{max}=4$. The price process and liability are the same as in previous counterexamples (cf. section \ref{ap:nonincutil}).

First we describe example with quadratic cost.
In Fig.~\ref{fig:boundsconstraint:a} there is $Q^*_{n-1}(s_{n-1},a_{n-1})$ plotted as color map
over its available actions (actions in the white regions are not allowed). The two dimensions spanning the plot are the action $a_{n-1} = \deval_{n}^1$ and cash component $\deval_{n-1}^{0}$ of the state. The other components of the state $s_{n-1} = (n-1,\deval_{n-1},x_{n-1})$ are
held fixed on values  $\deval_{n-1}^1 = 1.5$ and $x_{n-1} = 2$. One clearly see that this set is non-convex.
The figure \ref{fig:boundsconstraint:b} is a plot of $Q^*_{n-1}(s_{n-1},a_{n-1})$ as function of $a_{n-1}$ for the specific fixed state denoted by black horizontal line in \ref{fig:boundsconstraint:a}.
The gaps in action dimension correspond to not allowed actions,
i.e actions which leads cash state component $\deval^0_n$ being outside of the interval $[b_{min},b_{max}]$. One can see that the problems with the convex cost arises rather for cash $\deval_{n-1}^{0}$ near upper bound $b_{max}$.

Finally the example with nonconvex continuous cost is shown 
in the figure \ref{fig:boundsconstraint:c} and \ref{fig:boundsconstraint:d}, here the price component of the state was fixed at $x_{n-1} = 1$. The setting of other
parameters is the same as for quadratic cost. This eample was included
to demonstrate possible problems when cash $\deval_{n-1}^{0}$ is near lower bound $b_{min}$.

Notice that in original formulation the problem with bounding cash
variable $\delta^0_k$ does not arise,  because the accumulated
costs, which are causing the problem, are excluded from $\delta^0_k$
(and put to the last state component denoted $W_k$ in the section \ref{sec:portfolioReplAndConvexOpt}). This does not mean that the original formulation is
better, but rather simpler. Usually, when one wants to constraint/bound
the cash, it is desirable to do it including the accumulated
costs, especially when the costs could grow fast.

\begin{figure}[h]
    \begin{subfigure}[b]{0.49\linewidth}
        \centering%
        \includegraphics[width=\textwidth]{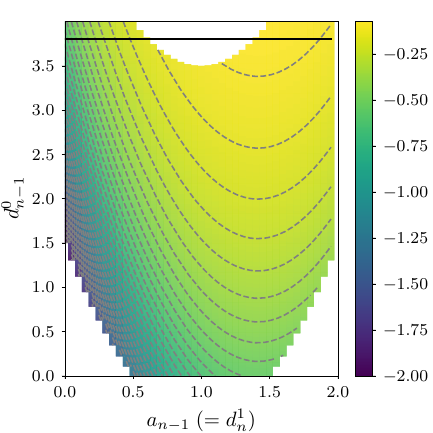}%
        \caption{Upper bound on cash.}%
        \label{fig:boundsconstraint:a}%
    \end{subfigure}\hfill%
    \begin{subfigure}[b]{0.49\linewidth}%
        \centering%
        \includegraphics[width=\textwidth]{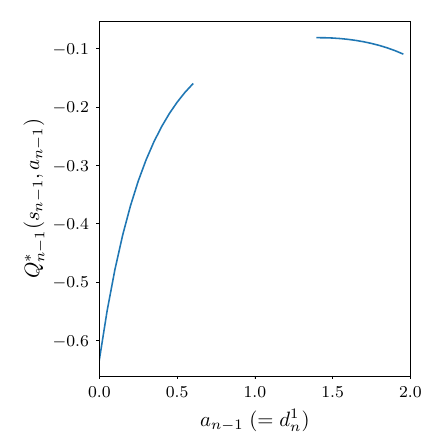}%
        \caption{Upper bound on cash.}%
        \label{fig:boundsconstraint:b}%
    \end{subfigure}\\
    \begin{subfigure}[b]{0.49\linewidth}%
        \centering
        \includegraphics[width=\textwidth]{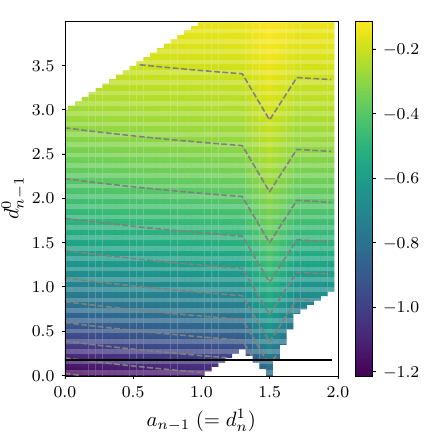}%
        \caption{Lower bounds on cash.}%
        \label{fig:boundsconstraint:c}%
    \end{subfigure}\hfill%
    \begin{subfigure}[b]{0.49\linewidth}%
        \centering%
        \includegraphics[width=\textwidth]{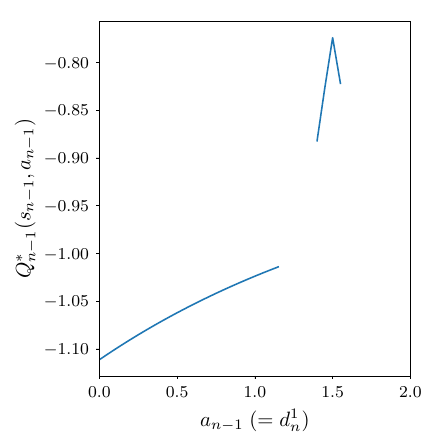}%
        \caption{Lower bounds on cash.}%
        \label{fig:boundsconstraint:d}%
    \end{subfigure}\\
   \caption{Plots of optimal action-value functions  $Q^{*}_{n-1}(s_{n-1},a_{n-1})$ illustrating problematic bounds on the cash state variable $\delta^0_k$. Fig.~\ref{fig:boundsconstraint:a} and Fig.~\ref{fig:boundsconstraint:b} illustrate quadratic costs and Fig.~\ref{fig:boundsconstraint:c} and Fig.~\ref{fig:boundsconstraint:d} illustrate non-convex costs. The non-convex sets of admissible actions (colored areas in heat maps) lead to disconnected sets of available actions in (b) and (d).}       
\label{fig:boundsconstraint}
\end{figure}

\section{Details on experiments}

\subsection{Experiments in minimal environments with non-convex rewards}

\subsubsection{Experimental Setup for Learning a Deterministic Sequence of Actions}
\label{app:learningDeterministic}
This section details the experimental configuration used in Sec.~\ref{se:learningDeterministic}. States are represented as in Equ.~\eqref{eq:simpleState} with $\adim=1$. Rewards are granted after the terminal action at $t_4$. Rewards only depend on correctness of the learned assignment $t_k\mapsto a_k$, such that the market and P\&L state components of~\eqref{eq:simpleState} play no role here. We use a dummy normal random variable to fill the market component.

For AlphaZero rewards are scaled to the interval $[-1, 1]$. A reward of $1$ signifies that all actions were taken correctly. Each neural UCT iteration consists of $5{,}000$ training episodes with $25$ individual Monte Carlo simulations per episode. The AlphaZero model is trained for $30$ cycles, totaling $150{,}000$ games. The neural network architecture consists of four hidden layers with $256$ neurons per layer, batch normalization, and ReLU activation functions. Training employs an Adam optimizer with a learning rate of $0.001$ for $10$ epochs with batch size $64$. The network predicts two outputs: value (tanh-activation) and action probabilities (softmax-activation). Validation is performed after each (of the $30$) training cycle on $10{,}000$ random market evolutions, with new networks accepted only upon performance improvement.

For DH the deterministic NN policies $F_k$ provide actions $F_k(s_k)=\delta_k^1$, where the time variable $t_k$ in $s_k$ is scaled to $[0,1]$. To compute \lq\lq{}rewards\rq\rq{} we employ squared distance loss between $F(s_k)$ and the correct action $a_k$. Since $a_k\in[-1,1]$ we employ $\tanh$ activation. As AlphaZero, DH is trained on a total of $150{,}000$ games, with episodes per epoch calculated by division through the number of epochs. The hyperparameters [hidden size, learning rate, number of layers, epochs] are optimized using a Tree-structured Parzen Estimator with the Optuna~\cite{optuna} package. For each parameter configuration, $100$ runs are executed, and the mean number of correct actions across these runs serves as the loss function to be maximized. After $50$ tuning rounds, the optimized parameters were determined to be: hidden size of 64, learning rate of 0.001, 2 layers, and 50 epochs. A comprehensive overview of the experimental configurations for both AlphaZero and Deep Hedging is provided in Table~\ref{tab:a1_table}. These experiments were conducted on a workstation running Ubuntu, equipped with an Intel Core i5-12600K CPU, 32GB of DDR5 RAM, and an NVIDIA RTX 3090 GPU.\typeout{textwidth=\the\textwidth}

\begin{table}[htbp]
\centering
\renewcommand{\arraystretch}{1.2} 
{\small
\begin{tabular}{>{\raggedright\arraybackslash}p{\dimexpr 0.21\linewidth-2\tabcolsep} p{\dimexpr 0.79\linewidth-2\tabcolsep}}
\hline
\specialrule{0pt}{2pt}{1pt} 
\multicolumn{2}{l}{\large \textbf{AlphaZero Configuration}} \\
\specialrule{0pt}{1pt}{2pt} 
\hline
\textbf{State representation} & Game state $s_k$ as described in Equ.~\eqref{eq:simpleState}\\
\hline
\textbf{Init.~state} & {$s_0=(0,(0,0),0,0)$} \\
\hline
\textbf{Action space} & Discrete action space with 21 actions: purchase or sale of up to one share in fractions of 1/10 or no transaction. \\
\hline
\textbf{Reward structure} & Rewards granted at $t=T$, calculated by the number of correct actions.\newline
Rewards are restricted to the interval $[-1, 1]$, where 1 signifies all actions are taken correctly. \\
\hline
\textbf{Training procedure} & Each neural UCT iteration consists of 5,000 episodes with 25 individual simulations per episode. 
The AlphaZero model is trained for 30 cycles, totaling $150{,}000$ hedging games. \\
\hline
\textbf{Neural network architecture} & Four hidden layers with 256 neurons per layer.\newline
Batch normalization and ReLU activation functions.\newline
Network predicts two outputs: value (tanh-activation) and action probabilities (softmax-activation). \\
\hline
\textbf{Optimization} & Training for 10 epochs with batch size 64.\newline
Adam optimizer with a learning rate of 0.001. \\
\hline
\textbf{Validation} & After neural network retraining: rewards measured on 10,000 random paths. 
New networks accepted only upon performance improvement. \\
\hline
\specialrule{0pt}{2pt}{1pt} 
\multicolumn{2}{l}{\large \textbf{Deep Hedging Configuration}} \\
\specialrule{0pt}{1pt}{2pt} 
\hline
\textbf{State representation} & Neural network input includes $(A_{k-1},X_k)$ with continuous state representation. \\
\hline
\textbf{Action space} & Continuous actions $\in [-1,1]$ (implemented via final tanh activation). \\
\hline
\textbf{Training procedure} & Trained on a total of 150,000 hedging games.\newline
Episodes per epoch calculated as $150{,}000$ divided by the number of epochs ($3{,}000$ episodes per epoch with 50 epochs). \\
\hline
\textbf{Optimization} & Training with batch size 32.\newline
Adam optimizer with a tuned learning rate. \\
\hline
\textbf{Loss function} & Calculated on squared distance with the correct action. \\
\hline
\textbf{Hyperpar-ameter optimization} & Parameters [hidden size, learning rate, number of layers, epochs] optimized using Tree-structured Parzen Estimator with Optuna~\cite{optuna}.\newline
100 runs executed for each parameter configuration.\newline
Mean number of correct actions over 100 runs used as the optimization objective. 
100 tuning rounds executed. \\
\hline
\textbf{Optimized parameters} & Hidden size: 64\newline
Learning rate: 0.001\newline
Number of layers: 2\newline
Epochs: 50 \\
\hline
\end{tabular}
}
\caption{Experimental configuration for the deterministic sequence learning task described in Sec.~\ref{se:learningDeterministic}.}
\label{tab:a1_table}
\end{table}

\subsubsection{Experimental Setup for learning to a choose a known portfolio composition}\label{app:learningKnown}
This section details the experimental configuration used in Sec.~\ref{se:learningKnown}. The environment and experimental setup largely follows that of Sec.~\ref{se:learningDeterministic} with modifications to incorporate market stochasticity. 
The key differences in hyperparameters are summarized in Table~\ref{tab:a2_table}. For AlphaZero, we increased the network capacity to 512 neurons per layer while maintaining the same architecture depth and optimization parameters. For Deep Hedging, hyperparameter optimization was conducted as in the previous experiment, resulting in a substantially larger model with increased hidden size, deeper architecture, and reduced learning rate.

\begin{table}[htbp]
\centering
\renewcommand{\arraystretch}{1.2}
{\small
\begin{tabular}{>{\raggedright\arraybackslash}p{\dimexpr 0.21\linewidth-2\tabcolsep} p{\dimexpr 0.395\linewidth-2\tabcolsep} p{\dimexpr 0.395\linewidth-2\tabcolsep}}
\specialrule{0pt}{2pt}{1pt} 
& \textbf{Sec.~\ref{se:learningDeterministic}} & \textbf{Sec.~\ref{se:learningKnown}} \\
\hline
\specialrule{0pt}{2pt}{1pt} 
\multicolumn{3}{l}{\large \textbf{AlphaZero Configuration Changes}} \\
\specialrule{0pt}{1pt}{2pt} 
\textbf{Hidden size} & 256 neurons per layer & 512 neurons per layer \\
\hline
\specialrule{0pt}{2pt}{1pt} 
\multicolumn{3}{l}{\large \textbf{Deep Hedging Configuration Changes}} \\
\specialrule{0pt}{1pt}{2pt} 
\textbf{Hidden size} & 64 & 512 \\
\textbf{Learning rate} & 0.001 & 0.0001 \\
\textbf{Number of layers} & 2 & 5 \\
\hline
\end{tabular}
}
\caption{Modified hyperparameters for the known portfolio composition learning task described in Sec.~\ref{se:learningKnown}, compared to those used in Sec.~\ref{se:learningDeterministic}.}
\label{tab:a2_table}
\end{table}

\subsection{Experiments in toy examples of market models with non-convex rewards}

\subsubsection{Experimental setup for portfolio replication in a trinomial market with non-convex costs}\label{app:toyMarketEnvs:trinomialAbsorbing}
This section details the experimental configuration used in Sec.~\ref{sec:toyMarketEnvs:trinomialAbsorbing}. The environment and overall structure largely follow the setting in Sec.~\ref{se:learningDeterministic}, with adaptations for the trinomial market dynamics and the presence of non-convex transaction costs. States are represented as in Equ.~\eqref{eq:simpleState}, with the initial state being $0.4$ shares held, a $0.0$ cash balance and an initial share price $X_0=5$.
As before, rewards are granted after the terminal action at $t_{4}$.
For AlphaZero, the reward function is modeled after Equ.~\eqref{eq:targetProblem}, with rewards scaled to the interval $[-1, 1]$, where a reward of $1$ corresponds to a perfect hedge. Compared to Sec.~\ref{se:learningDeterministic}, we increased the neural network hidden layers size to $512$ neurons, increased the number of self-play training cycles, and reduced the number of games per cycle. All other architectural components and training procedures (optimizer, activations, validation) remain equal with those described in App.~\ref{app:learningDeterministic}.

For DH, the loss is also modeled by Equ.~\eqref{eq:targetProblem}, using the squared distance between predicted and optimal hedges. Hyperparameter optimization was performed using the same Tree-structured Parzen Estimator framework as before. The optimization process yielded a deeper architecture with more layers, a larger hidden size, and a reduced learning rate compared to the previous experiment. These changes reflect the increased complexity introduced by the non-convex cost structure. A summary of hyperparameters used in this setting is provided in Table~\ref{tab:a3_table}.

\begin{table}[htbp]
\centering
\renewcommand{\arraystretch}{1.2}
{\small
\begin{tabular}{>{\raggedright\arraybackslash}
p{\dimexpr 0.21\linewidth-2\tabcolsep} p{\dimexpr 0.395\linewidth-2\tabcolsep} p{\dimexpr 0.395\linewidth-2\tabcolsep}}
& \textbf{Sec.~\ref{se:learningDeterministic}} & \textbf{Sec.~\ref{sec:toyMarketEnvs:trinomialAbsorbing}} \\
\hline
\specialrule{0pt}{2pt}{1pt} 
\multicolumn{3}{l}{\large \textbf{AlphaZero Configuration Changes}} \\
\specialrule{0pt}{2pt}{1pt} 
\textbf{Action space} & 21 discrete actions & 20 discrete actions \\
\textbf{Training procedure} & 30 self-play cycles of $5{,}000$ episodes each & 60 self-play cycles of $2{,}500$ episodes each \\
\textbf{Hidden size} & 256 neurons per layer & 512 neurons per layer \\
\hline
\specialrule{0pt}{2pt}{1pt} 
\multicolumn{3}{l}{\large \textbf{Deep Hedging Configuration Changes}} \\
\specialrule{0pt}{1pt}{2pt} 
\textbf{Hidden size} & 64 & 128 \\
\textbf{Learning rate} & 0.001 & 0.0001 \\
\textbf{Number of layers} & 2 & 5 \\
\hline
\textbf{Init.~state} & {$s_0=(0,(0,0),0,0)$} & {$s_0 = (0,(0,0.4),5,0)$}\\
\hline
\end{tabular}
}
\caption{Modified hyperparameters for the trinomial market with non-convex costs learning task described in Sec.~\ref{sec:toyMarketEnvs:trinomialAbsorbing}, compared to those used in Sec.~\ref{se:learningDeterministic}.}
\label{tab:a3_table}
\end{table}

\subsubsection{Experimental setup for portfolio replication in a GBM market with non-convex costs} \label{app:toyMarketEnvs:gbmCosts}
This section details the experimental configuration used in Sec.~\ref{sec:toyMarketEnvs:gbmCosts}.
The environment and experimental setup largely follows that of Sec.~\ref{se:learningDeterministic} with  modifications to fit with the increased state-space. The differences in hyperparameters are summarized in Table~\ref{tab:a4_table}. {As in Sec.~\ref{sec:toyMarketEnvs:trinomialAbsorbing} states are represented as in Equ.~\eqref{eq:simpleState}, with the initial state as in App.~\ref{app:toyMarketEnvs:trinomialAbsorbing}. Prior to executing tree search market states are discretized by rounding to the second decimal place. Rewards are granted after the terminal action at $t_{4}$.}
For AlphaZero, the reward function is modeled by Equ.~\eqref{eq:targetProblem}, consistent with the formulation described in App.~\ref{app:toyMarketEnvs:trinomialAbsorbing}. In this setting, we increased the MCTS search iterations per episode and increased the training epochs, while maintaining the same architecture depth and optimization parameters. 

For Deep Hedging, the loss function is based on Equ.~\eqref{eq:targetProblem}, as in App.~\ref{app:toyMarketEnvs:trinomialAbsorbing}. Hyperparameter optimization was conducted as in the previous experiment, resulting in a larger model with increased hidden size, deeper architecture, and substantially increased learning rate.

\begin{table}[htbp]
\centering
\renewcommand{\arraystretch}{1.2}
{\small
\begin{tabular}{>{\raggedright\arraybackslash}p{\dimexpr 0.21\linewidth-2\tabcolsep} p{\dimexpr 0.395\linewidth-2\tabcolsep} p{\dimexpr 0.395\linewidth-2\tabcolsep}}
& \textbf{Sec.~\ref{se:learningDeterministic}} & \textbf{Sec.~\ref{sec:toyMarketEnvs:gbmCosts}} \\
\hline
\specialrule{0pt}{2pt}{1pt} 
\multicolumn{3}{l}{\large \textbf{AlphaZero Configuration Changes}} \\
\specialrule{0pt}{1pt}{2pt} 
\textbf{Action space} & 21 discrete actions & 20 discrete actions \\
\textbf{Training procedure} & 25 UCT simulations per episode & 35 UCT simulations per episode \\
\textbf{Optimization} & Training for 10 epochs & Training for 20 epochs \\
\hline
\specialrule{0pt}{2pt}{1pt} 
\multicolumn{3}{l}{\large \textbf{Deep Hedging Configuration Changes}} \\
\specialrule{0pt}{1pt}{2pt} 
\textbf{Hidden size} & 64 & 128 \\
\textbf{Learning rate} & 0.001 & 0.030 \\
\textbf{Number of layers} & 2 & 4 \\
\hline
\textbf{Init.~state} & {$s_0=(0,(0,0),0,0)$} & {$s_0 = (0,(0,0.4),5,0)$}\\
\hline
\end{tabular}
}
\caption{Modified hyperparameters for the GBM market with non-convex costs learning task described in Sec.~\ref{sec:toyMarketEnvs:gbmCosts}, compared to those used in Sec.~\ref{se:learningDeterministic}.}
\label{tab:a4_table}
\end{table}

\subsubsection{Experimental setup for portfolio replication in a trinomial market with trading constraints} \label{app:toyMarketEnvs:trinomialWithConstraint}
This section details the experimental configuration used in Sec.~\ref{sec:toyMarketEnvs:trinomialWithConstraint}.
This section details the experimental configuration used in Sec.\ref{sec:toyMarketEnvs:trinomialWithConstraint}. The environment and model setup largely follow that of Sec.\ref{se:learningDeterministic}, with modifications to accommodate trading constraints. {States are represented with Equ.~\eqref{eq:simpleState} as described previously, with the initial state having initial holdings of $1.5$ shares, cash balance of $0.8125$ and initial share price $X_0=5$. Rewards are granted after the terminal action at $t_{4}$.}
For this experiment, AlphaZero utilizes a reward based on exponential utility. The differences in hyperparameters are summarized in Table~\ref{tab:a5_table}. In this experiment, only the AlphaZero model was evaluated. The AlphaZero configuration remained consistent with prior experiments, except for the hidden layer size, which was increased to 512 neurons. Additionally, the action space was expanded to 40 discrete actions, linearly spaced in the interval $[0,2]$.

\begin{table}[tbp]
\centering
\renewcommand{\arraystretch}{1.2}
{\small
\begin{tabular}{>{\raggedright\arraybackslash}p{\dimexpr 0.21\linewidth-2\tabcolsep} p{\dimexpr 0.395\linewidth-2\tabcolsep} p{\dimexpr 0.395\linewidth-2\tabcolsep}}
& \textbf{Sec.~\ref{se:learningDeterministic}} & \textbf{Sec.~\ref{sec:toyMarketEnvs:trinomialWithConstraint}} \\
\hline
\specialrule{0pt}{2pt}{1pt} 
\multicolumn{3}{l}{\large \textbf{AlphaZero Configuration Changes}} \\
\specialrule{0pt}{1pt}{2pt} 
\textbf{Action space} & $21$ discrete actions & $40$ discrete actions \\
\textbf{Training procedure} & $25$ UCT simulations per episode, 30 self-play cycles of $5{,}000$ episodes each & $35$ UCT simulations per episode, 60 self-play cycles of $500$ episodes each \\
\textbf{Validation} & After neural network retraining: rewards measured on 10,000 random paths. & After neural network retraining: rewards measured on 500 random paths. \\
\hline
\textbf{Hidden size} & $256$ neurons per layer & $512$ neurons per 
layer \\
\hline
\textbf{Init.~state} & {$s_0=(0,(0,0),0,0)$} & {$s_0 = (0,(0.8125,1.5),5,0)$}\\
\hline
\end{tabular}
}
\caption{Modified hyperparameters for the experimental setup for portfolio replication in a trinomial market with trading constraints task described in Sec.\ref{sec:toyMarketEnvs:trinomialWithConstraint}, compared to those used in Sec.~\ref{se:learningDeterministic}.}
\label{tab:a5_table}
\end{table}

\subsection{Experiments for measuring sample efficiency}\label{app:sampleEfficiency}
This section details the experimental configuration used in Sec.~\ref{sec:sampleEfficiency}. A reservoir of $50{,}000$ price paths was generated using a trinomial market model with parameters $p_u \approx 0.247$, $p_d \approx 0.253$, and $p_m = 1 - p_u - p_d$. For each experiment, two non-overlapping subsets of the reservoir were sampled: one for training and one for evaluation.{As before, states are represented as in Equ.~\eqref{eq:simpleState}, with the initial state having initial holdings of $0$ shares, cash balance of $0.02783$ and initial share price $X_0=1$. Rewards are granted after the terminal action at $t_{19}$.}

The Deep Hedging agent was implemented as in prior experiments. For each run, it was trained from scratch on $20{,}000$ episodes,   using the training subset, with the same loss used for experiment described in Sec.~\ref{app:toyMarketEnvs:trinomialAbsorbing}. After training, performance was assessed on the evaluation subset by measuring terminal losses. The hyperparameters chosen for the Deep Hedging model are described in Table~\ref{tab:a7_table}.

The MuZero variant employed in these experiments incorporates a learned neural dynamics model. At the beginning of each training run, this dynamics model was trained to approximate the transition probabilities $p_u$, $p_d$, and $p_m$ conditioned on the current time step $t$ and stock price $X$. The network architecture consists of five linear layers with leaky ReLU activations and layer normalization. Training was conducted for $5{,}000$ epochs using the Adam optimizer (learning rate $0.001$) with a batch size of $32$, minimizing the Kullback–Leibler divergence between predicted transition probabilities and empirical frequencies in the training reservoir.

During training, MCTS simulations were performed using synthetic market trajectories sampled from the learned dynamics model. After each search, environment transitions were executed using the true transition probabilities from the reservoir. Evaluation of the MuZero policy was performed without search, using only the policy network. For MuZero, the reward function is modeled after Equ.~\eqref{eq:targetProblem}, consistent with the formulation described in App.~\ref{app:toyMarketEnvs:trinomialAbsorbing}.
Detailed hyperparameters for the MuZero model are displayed in Table~\ref{tab:a6_table}.

\begin{table}[htbp]
\centering
\renewcommand{\arraystretch}{1.2}
{\small
\begin{tabular}{>{\raggedright\arraybackslash}p{\dimexpr 0.21\linewidth-2\tabcolsep} p{\dimexpr 0.395\linewidth-2\tabcolsep} p{\dimexpr 0.395\linewidth-2\tabcolsep}}
& \textbf{Sec.~\ref{se:learningDeterministic}} & \textbf{Sec.~\ref{sec:sampleEfficiency}} \\
\hline
\specialrule{0pt}{2pt}{1pt} 
\multicolumn{3}{l}{\large \textbf{Deep Hedging Configuration Changes}} \\
\specialrule{0pt}{1pt}{2pt} 
\textbf{Training procedure} & Trained on a total of $150{,}000$ hedging games. Episodes per epoch calculated as $150{,}000$ divided by the number of epochs & Trained on a total of $20{,}000$ hedging games. \\
\hline
\end{tabular}
}
\caption{Modified Deep Hedging hyperparameters for the sample efficiency task described in Sec.\ref{sec:sampleEfficiency}, compared to those used in Sec.~\ref{se:learningDeterministic}.}
\label{tab:a7_table}
\end{table}

\begin{table}[htbp]
\centering
\renewcommand{\arraystretch}{1.2} 
{\small
\begin{tabular}{>{\raggedright\arraybackslash}p{\dimexpr 0.21\linewidth-2\tabcolsep} p{\dimexpr 0.79\linewidth-2\tabcolsep}}
\hline
\specialrule{0pt}{2pt}{1pt} 
\multicolumn{2}{l}{\large \textbf{MuZero Configuration}} \\
\specialrule{0pt}{1pt}{2pt} 
\hline
\textbf{State representation} & Game state $s_k$ as described in Equ.~\eqref{eq:simpleState}. {Initial state $s_0 = (0,(0.02783,0),1,0)$}\\
\hline
\textbf{Action space} & Discrete action space with 21 actions: purchase or sale of up to one share in fractions of 1/10 or no transaction. \\
\hline
\textbf{Reward structure} & Rewards granted at $t=T$, calculated based on the squared distance.\newline
Rewards are restricted to the interval $[-1, 1]$, where 1 signifies a perfect hedge. \\
\hline
\textbf{Training procedure} & Each neural UCT iteration consists of 500 episodes with 25 individual simulations per episode.\newline
The AlphaZero model is trained for 40 cycles, totaling 20,000 hedging games. \\
\hline
\textbf{Neural network architecture} & Four hidden layers with 512 neurons per layer.\newline
Batch normalization and ReLU activation functions.\newline
Network predicts two outputs: value (tanh-activation) and action probabilities (softmax-activation). \\
\hline
\textbf{Optimization} & Training for 10 epochs with batch size 64.\newline
Adam optimizer with a learning rate of 0.001. \\
\hline
\textbf{Validation} & After neural network retraining: rewards measured on 10,000 random paths.\newline
New networks accepted only upon performance improvement. \\
\hline
\textbf{Dynamics Model} & Neural Network with five linear layers of size $512$, leaky ReLU activation and layer normalization. \newline 
The dynamics model has two inputs, one for current time step and one for current stock price.
\newline 
The dynamics model is trained once at the beginning of every execution, for 5,000 epochs using the Adam optimizer, learning rate of 0.001, minimizing the Kullback–Leibler divergence between predicted transition probabilities $p_u$, $p_d$, $p_m$ and empirical probabilities measured from the training reservoir.
 \\
\hline
\textbf{Updated MCTS search} & During each UCT iteration, the state inside the MCTS tree is constructed using the probability distribution outputted by the dynamics model. \\
\hline
\end{tabular}
}
\caption{Detailed experimental configuration for the MuZero model utilized in the sample efficiency task described in Sec.~\ref{sec:sampleEfficiency}.}
\label{tab:a6_table}
\end{table}

\end{document}